\documentclass[10pt]{article} %
\usepackage[accepted]{tmlr}

\usepackage{amsmath,amsfonts,bm}

\def\eqref#1{equation~\ref{#1}}

\def\ceil#1{\lceil #1 \rceil}

\def\1{\bm{1}}

\def\rva{{\mathbf{a}}}

\def\rvs{{\mathbf{s}}}

\def\rvv{{\mathbf{v}}}

\DeclareMathAlphabet{\mathsfit}{\encodingdefault}{\sfdefault}{m}{sl}
\SetMathAlphabet{\mathsfit}{bold}{\encodingdefault}{\sfdefault}{bx}{n}

\def\gA{{\mathcal{A}}}

\def\gD{{\mathcal{D}}}

\def\gH{{\mathcal{H}}}

\def\gS{{\mathcal{S}}}
\def\gT{{\mathcal{T}}}

\newcommand{\E}{\mathbb{E}}

\newcommand{\R}{\mathbb{R}}

\DeclareMathOperator*{\argmax}{arg\,max}

\usepackage{amsmath}
\usepackage{amssymb}
\usepackage{mathtools}
\usepackage{amsthm}
\usepackage{algorithm}

\usepackage{algcompatible}
\usepackage[capitalize,noabbrev]{cleveref}
\usepackage{caption}
\usepackage{subcaption}
\usepackage{accents}
\usepackage{float}
\usepackage{booktabs}       %
\usepackage{xcolor}  
\definecolor{ForestGreen}{RGB}{34,139,34}
\usepackage{url}
\usepackage{multicol}
\usepackage{algpseudocode}
\theoremstyle{plain}
\newtheorem{theorem}{Theorem}[section]
\newtheorem{proposition}[theorem]{Proposition}
\newtheorem{lemma}[theorem]{Lemma}
\newtheorem{corollary}[theorem]{Corollary}
\theoremstyle{definition}

\newtheorem{assumption}[theorem]{Assumption}
\theoremstyle{remark}

\newcommand{\semigrad}{\tilde{\nabla}}

\Crefname{equation}{Equation}{Equations}
\Crefname{figure}{Figure}{Figures}
\Crefname{tabular}{Table}{Tables}

\usepackage{tikz}
\usetikzlibrary{automata, arrows.meta, positioning}

\usepackage{enumitem}

\setlist{topsep=0.0em, itemsep=0.0em} 

\definecolor{TNorange}{HTML}{f44f39}
\definecolor{FRblue}{HTML}{2070b4}

\title{Bridging the Gap Between Target Networks and Functional Regularization}

\author{\name Alexandre Pich\'{e} \footnote[1]{shared first authorship} \email alexandrelpiche@gmail.com \\
\addr ServiceNow Research \\
Mila, Universit\'e de Montr\'eal \AND 
\name Valentin Thomas \footnote[1]{shared first authorship} \email vltn.thomas@gmail.com \\
\addr Mila, Universit\'e de Montr\'eal  \AND
\name Rafael Pardinas \email rafael.pardinas@servicenow.com \\
\addr ServiceNow Research \AND 
Joseph Marino \email jlouismarino@gmail.com \\
\addr DeepMind, London \AND 
\name Gian Maria Marconi \email gmmarconi@gmail.com\\
\addr RIKEN Center for Advanced Intelligence Project
\AND \name Christopher Pal \email christopher.pal@polymtl.ca \\
\addr Mila, Polytechnique Montr\'eal\\ Canada CIFAR AI Chair 
\AND \name Mohammad Emtiyaz Khan \email emtiyaz.khan@riken.jp\\
\addr RIKEN Center for Advanced Intelligence Project
}

\begin{document}

\maketitle

\begin{abstract}
    \looseness=-1 Bootstrapping is behind much of the successes of deep Reinforcement Learning. However, learning the value function via bootstrapping often leads to unstable training due to fast-changing target values. Target Networks are employed to stabilize training by using an additional set of lagging parameters to estimate the target values. Despite the popularity of Target Networks, their effect on the optimization is still misunderstood. In this work, we show that they act as an implicit regularizer which can be beneficial in some cases, but also have disadvantages such as being inflexible and can result in instabilities, even when vanilla TD(0) converges.
    To overcome these issues, we propose an explicit Functional Regularization alternative that is flexible and a convex regularizer in function space and we theoretically study its convergence. 
    We conduct an experimental study across a range of environments, discount factors, and off-policiness data collections to investigate the effectiveness of the regularization induced by Target Networks and Functional Regularization in terms of performance, accuracy, and stability. 
    \looseness=-1 Our findings emphasize that Functional Regularization can be used as a drop-in replacement for Target Networks and result in performance improvement. Furthermore, adjusting both the regularization weight and the network update period in Functional Regularization can result in further performance improvements compared to solely adjusting the network update period as typically done with Target Networks. Our approach also enhances the ability to networks to recover accurate $Q$-values. The code is available here \url{https://github.com/AlexPiche/fr-tmlr/}.
\end{abstract}

\section{Introduction}

Value functions are at the core of deep Reinforcement Learning (RL) algorithms. In order to learn a value function via regression, we need to estimate the target value of the state. Estimating the target value using Monte Carlo roll-outs requires complete trajectories and can have high variance. These issues make value estimation via Monte Carlo inefficient and unpractical in complex environments. Temporal Difference (TD)~\citep{sutton1988learning} algorithm addresses these issues by using \textit{bootstrapping}~\citep{sutton2018reinforcement}. Specifically, the value of a state is estimated using the immediate reward and the discounted predicted value of its successor. Bootstrapping has potentially lower variance than Monte Carlo, particularly in the off-policy settings, and does not require entire trajectories to estimate the target value. Bootstrapping can unfortunately be unstable as the target value is estimated with constantly updated parameters. 

\looseness=-1 
While stabilizing off-policy value function learning has been an active area of research \citep{precup2000eligibility, precup2001off, sutton2008convergent, sutton2009fast}. We focus on \textit{Target Networks} (TN) \citep{mnih2013playing} a popular technique in deep RL which uses an additional set of lagging parameters to estimate the target value.
Today TNs are central to most modern deep RL algorithms~\citep{mnih2013playing, lillicrap2015continuous, abdolmaleki2018maximum, haarnoja2018soft, fujimoto2018addressing, hausknecht2015deep, van2016deep, hessel2018rainbow}.
Their popularity is due to the ease of implementation, little computation overhead, and their demonstrated effectiveness in a range of domains. Despite their popularity, the inner workings of TNs and how they stabilize the optimization of the value function remain unclear. While~\citet{van2018deep} empirically shows that TNs help but do not prevent divergence, ~\citet{zhang2021breaking} shows that a variant of TNs can prevent divergence in simple cases.

In this work, we shed light on the regularization implicitly performed by TNs and how it is related to a regularization in the $Q$-function space whose strength is dictated by the discount factor $\gamma$. However, this regularization cannot be written as the gradient of a convex regularizer, as is commonplace in optimization. We prove that, because of this, TNs can unstabilize TD instead of making it more stable. This leads us to propose a simple Functional Regularization (FR) alternative that has the advantages of provably ensuring TD remains stable while being more flexible than TN.

In our experimental study, we explored a variety of environments, including the two-state MDP~\citep{tsitsiklis1996feature}, the Four Rooms environment~\citep{sutton1999between}, and the Atari suite~\citep{bellemare2013arcade}, to assess the efficacy of regularization introduced by TN and FR in relation to performance, accuracy, and divergence. 
Our findings emphasize that Functional Regularization without regularization weight tuning can be used as a drop-in replacement for Target Networks without loss of performance and can result in performance improvement. Additionally, the combined use of the additional regularization weight and the network update period in FR can lead to enhanced performance compared to merely tuning the network update period for TN.

\textbf{Contributions:}
\begin{enumerate}
    \item Target Network induces a pseudo-regularization which effectively can slow down the Temporal Difference (TD) optimization process. However, it can also unstabilize TD, i.e, make the iteration divergent while TD alone would converge.\\ 
    \textbf{Evidence:} Explanation in~\cref{sec:FR} and example in \cref{exp:small_mdp}
    \item There is a continuous spectrum between FR and TD(0) and as such, for $\kappa$ small enough FR has the same convergence properties as TD.\\
    \textbf{Evidence: } \Cref{prop:conv_fr} and its proof in~\Cref{app:proof_fr}.
    \item FR can result in improved performance when compared to TN and TD(0) in a variety of environments, discount factors, and levels of ``off-policiness''.\\ %
    \textbf{Evidence:} Demonstrated empirically  on the Four Room environment (\cref{sec:4room_results}) and the Atari suite (\cref{sec:atari_suite}, \cref{sec:atari_results}).
\end{enumerate}

\section{Background}
\label{sec: background}

\textbf{Preliminaries.} We consider the general case of a Markov Decision Process (MDP)~\citep{puterman2014markov} defined by $\{\gS, \gA, p, r, \gamma, \mu\}$, where $\gS$ and $\gA$ respectively denote the finite state and action spaces, $p(\rvs' | \rvs,\rva)$ represents the environment transition dynamics, i.e the distribution of the next state taking action $\rva$ in state $\rvs$. $r \colon \gS \times \gA \to \R$ denotes the reward function, $\gamma \in [0, 1)$ is the discount factor, and $\mu$ is the initial state distribution. For a given policy, $\pi: \gS \rightarrow \Delta(\gA$), a starting state $\rvs$ and action $\rva$, the value function $Q^\pi \in \mathbb{R}^{|\gS| \times |\gA|}$ is the expected discounted sum of rewards:
\begin{equation}
    Q^\pi(\rvs, \rva) = \E_{\pi}\bigg[\sum_{t \geq 0} \gamma^t r(\rvs_t, \rva_t) |\rvs_0=\rvs, \rva_0=\rva\bigg].
\end{equation}
Additionally, the value function satisfies the Bellman equation~\citep{bellman1966dynamic} 
\begin{align}
    Q^\pi &= R + \gamma P^\pi Q^\pi
          \equiv \gT Q^\pi%
    \label{eq: Bellman 1}
\end{align}
\looseness=-1 where $R \in \mathbb{R}^{|S|\cdot|A|}$ the reward vector and the state-action to state-action transition matrix $P^\pi \in \mathbb{R}^{|\gS| \cdot |\gA| \times |\gS|\cdot|\gA|}$ where $P^\pi\big[(\rvs,\rva), (\rvs', \rva')\big] = \pi(\rva'|\rvs') \cdot p(\rvs' | \rvs,\rva)$ and $\gT: \mathbb{R}^{|\gS| \cdot |\gA| \times |\gS|\cdot|\gA|} \rightarrow \mathbb{R}^{|\gS| \cdot |\gA| \times |\gS|\cdot|\gA|}$ is the Bellman operator.

\textbf{Linear Function Approximation (LFA).} 
The value function $Q^\pi$ can be approximated via a linear function. For a fixed feature matrix $\Phi \in \mathbb{R}^{|S|\cdot|A| \times p}$, our aim is to learn a parameter vector of dimension $p$, $w \in \mathbb{R}^p$, so that $Q_w \equiv \Phi w$ approximates $Q^\pi$. 
We denote the off-policy sampling distribution by the diagonal matrix $D \in \mathbb{R}^{|S|\cdot|A| \times |S|\cdot|A|}$ with diagonal entries $d(s,a)$, positive and summing to 1. %
The update rule given by the TD(0) semi-gradient~\citep{sutton2018reinforcement} is given by
\begin{equation}
    \semigrad_w \ell^{\text{TD}}(w) = -\Phi^\top D (R + \gamma P^\pi \Phi w - \Phi w)
\end{equation}
where the remainder of the return is estimated via bootstrapping, i.e., $P^\pi \Phi w$.

\looseness=-1 \textbf{Deep Neural Networks (DNNs) Approximation.}
In complex environments, it is difficult to adequately design features to approximate $Q^\pi$ with a linear function.
Instead, non-linear functions, such as DNNs, are used to parametrize $Q_w$ to automatically learn features and to improve the approximation to $Q^\pi$. But using DNNs to estimate $Q^\pi$ using off-policy data and bootstrapping can be unstable and diverge~\citep{sutton2018reinforcement, van2018deep}. It is common to use a lagging set of weights $\bar{w}$ to estimate the next value function to mitigate divergence~\citep{mnih2013playing, lillicrap2015continuous}. The network parametrized by the set of lagging weights is commonly referred to as a target network. Despite the popularity of target networks, there are still gaps in our understanding of the impact it has on learning dynamics.

\section{Functional Regularization as an Alternative to Target Networks}

\label{sec:FR}
\subsection{Understanding Target Network's implicit regularization}
In this subsection, we will look more closely at the effects TNs have on the optimization. We place ourselves in the classical Linear Function Approximation (LFA) setting~\citep{sutton2018reinforcement} in order to be able to make theoretical contributions.

\textit{Target Networks}~\citep{mnih2013playing} are a popular approach to stabilize Temporal Difference learning. A periodically updated copy, $Q_{\bar{w}}$, of $Q_w$  is used to estimate the next state value. This causes the regression targets to no longer directly depend on the most recent estimate of $w$. With this, the Mean Squared Bellman Error for a given state transition is
\begin{equation}
    \label{eqn:q_learning}
    \ell^{\text{TN}} = \tfrac{1}{2} \| R + \gamma P^\pi Q_{\bar{w}} - Q_w\|^2_D.
\end{equation}

The semi-gradient~\citep{sutton2018reinforcement}, denoted by $\semigrad$, of the expected squared Bellman error with a TN can be decomposed as (see proof in~\cref{appendix: derivation}) 
\begin{equation}
\label{eq:update_tn}
    \semigrad_w\ell^{\text{TN}}(w) = \underbrace{-\Phi^\top D (R + \gamma P^\pi \Phi w - \Phi w)}_{\text{TD}(0)}
    +  \underbrace{{\gamma \ \Phi^\top D P^\pi \Phi (w - \bar{w})}}_{``Regularizer"}.
\end{equation}
\looseness=-1 Written in this form, the effects of using TNs become clearer. The first term is the usual TD(0) update that we will not attempt to modify in this paper, and the second term can be interpreted as a ``regularization'' term that encourages the weights $w$ to stay close to the frozen weights $\bar{w}$. This is in line with the intuition that TNs ``slow down'' or ``stabilize'' the optimization. However, this ``regularizer" suffers from two main issues.

\begin{itemize}
    \item The first one is the weighting of the regularization which is equal to the discount factor $\gamma$. This may be problematic as $\gamma$ controls the agent's effective horizon~\citep{sutton2018reinforcement} and is part of the problem definition. Therefore, TNs are inflexible in the sense that the weight of the regularizer cannot be controlled independently from the agent's effective horizon, and can only be controlled through the target update period $T$.  We show experimentally in~\cref{sec:four_room} that the update period $T$ can be ineffective at controlling the strength of the regularization.

\item \looseness=-1 The second and most important point is that the ``regularization'' is not a proper regularization term. It may be interpreted as the gradient of a quadratic when $\Phi^\top D P^\pi \Phi$ is symmetric and positive, which is not the case in general. The underlying issue is that if $\Phi^\top D P^\pi \Phi$ has a negative eigenvalue, the ``regularizer'' will actually push $w$ away from $\bar{w}$ instead of bringing them closer and thus will render the whole optimization unstable. \textbf{This is an important theoretical issue as usually a basic requirement for a regularizer is that it cannot cause the original method to diverge.} It is verified in~\cref{exp:small_mdp} that TN can create additional divergence zones that were not present in TD(0).
\end{itemize}
\looseness=-1 Both aforementioned issues can be resolved simply. For the first point, we replace $\gamma$ with a separate hyperparameter $\kappa \ge 0$. As for the second point, changing $\Phi^\top D P^\pi \Phi$ to $\Phi^\top D \Phi$ ensures that the second term is a proper quadratic regularizer that cannot destabilize the optimization~\citep{tihonov1963solution, tikhonov1943stability, lyapunov1992general}.

\subsection{Introducing Functional Regularization}

\looseness=-1 Following the reasoning developed in the previous subsection, we would like to design a loss $\ell$ whose semi-gradient is
\begin{equation}\label{eq:update_fr}
    \semigrad_w\ell(w) = {\Phi^\top D (R + \gamma P^\pi \Phi w - \Phi w)} +
    {\kappa\ {\Phi^\top D \Phi} (w-\bar{w})}.
\end{equation}

The term $\kappa\ {\Phi^\top D \Phi} (w-\bar{w})$ is the gradient of the quadratic form $$\frac{\kappa}{2} (w-\bar{w})^\top {\Phi^\top D \Phi} (w-\bar{w}) = \frac{\kappa}{2} \| Q_w - Q_{\bar{w}} \|_D^2$$ as $Q_w = \Phi w$. We see that the natural resulting regularizer is a Functional Regularizer in norm $\| \cdot \|_D$, the usual norm when studying the convergence of TD in RL~\citep{bertsekas1996neuro}. This leads to the \emph{Functionally regularized} Mean Square Bellman Error which penalizes the norm between the current Q-value estimate $Q_{w}(\rvs, \rva)$ and a lagging version of it $Q_{\bar{w}}(\rvs, \rva)$, giving us
\begin{equation}
\label{eq:fr_loss}
   \ell^{\text{FR}}(w) = \frac{1}{2}  \| R + \gamma \ceil{P^\pi Q_w} - Q_w\|_D^2 + \frac{\kappa}{2} \|Q_w - Q_{\bar{w}}\|_D^2,
\end{equation}

where $\ceil{}$ denotes the stop-gradient operator. We observe that the up-to-date parameter $w$ is used in the Squared Bellman Error and there is the additional decoupled FR loss to stabilize $w$ by regularizing the $Q$ function. $\kappa\geq 0$ is the regularization parameter and we recover TD learning for $\kappa=0$.

\looseness=-1 Critically, unlike~\Cref{eqn:q_learning}, the target $Q$-value estimates are supplied by the up-to-date $Q$-network, with the explicit regularization now separately serving to stabilize training. %
As with a TN, we can update the lagging network periodically to control the stability of the $Q$-value estimates. Overall, $\ell^{\text{FR}}$ is arguably similar in complexity to $\ell^{\text{TN}}$, requiring only an additional function evaluation for $Q_w (\rvs_{t+1}, \rva_{t+1})$ and an additional hyper-parameter, $\kappa$.  We report an execution time analysis of replacing TN with FR in~\cref{fig:time_comparison}. While the gradient update is now slightly longer on average, in a training scenario a large portion of the time is spent on inference and simulation which make the difference in gradient update time negligible.

\subsection{Stability of Target Network regularization and Functional Regularization}

Now that we have highlighted potential issues that could arise when using TNs and proposed an alternative objective function, we will show that indeed TN can \emph{unstabilize} Temporal Difference learning while FR, under some conditions, does not.

\begin{theorem}[Target Networks can \textbf{unstabilize} TD]
\label{theorem:TN_divergence}
Using TD(0) with Target Networks (~\cref{eqn:q_learning}) can diverge while simply using TD(0) would converge.
\end{theorem}

A simple example of this situation is exhibited in~\Cref{exp:small_mdp}. Note that in~\cref{app:proof_tn} we precisely characterize  the domain of convergence of TD(0) with TN for large $T$ and show that it changes the convergence domain of TD(0), leading to the theorem above.

On the other hand, for FR, we have

\begin{proposition}[Convergence of Temporal Difference with Functional Regularization]
\label{prop:conv_fr}
If $\Phi, D, P^\pi$ are chosen so that TD(0) converges, then for $\kappa$ small enough, for $T$ large enough, TD(0) with Functional Regularization is guaranteed to converge.
\end{proposition}
\begin{proof}
The proof can be found in~\cref{app:proof_fr}.
\end{proof}

The results above highlight the intuition of the last subsections: while TNs can slow down learning they also interfere with TD learning, changing its convergence domain. On the other hand, FR with appropriately chosen parameters does not impact convergence while still being able to regularize the algorithm.

\looseness=-1 Finally, note here that \textbf{results above do not mean that using TD(0) with FR is always more stable than using it with TN}. When TD(0) is already unstable, depending on the interplay between the features, the off-policy distribution, and the transition matrix, it would be possible in some cases for TNs to make the iteration stable. However, this is unlikely and shows that it is impossible when the features are low dimensional (see \Cref{corollary:stability}).

\paragraph{Connection to GTD methods:}
Similarly to TN and FR updates \Cref{eq:update_tn} and \Cref{eq:update_fr}, gradient TD methods~\citep{antos2008learning} such as TDC and GTD2 \citep{sutton2009fast} make use of a second set of weights. This second set of weights is used to add a correction term to the TD update so that it approximates a gradient update, thus not suffering from the divergence induced by the deadly triad.
The correction term of gradient TD methods has strong similarities with the TN correction $\gamma \Phi^\top D P \Phi (w - \bar{w})$ of \cref{eq:update_tn}. However the additional set of weights used in gradient TD methods is the result of a least squares optimization process and approximates a projection of the TD error for the current state.
Therefore, the theoretical convergence of these methods rely on a two-timescale argument where the second set of weights must converge close to their optimal values. In practice, it seems that these algorithms work best when the learning rate for the second set of weights is very small~\citep{ghiassian2020gradient}, thus making these gradient TD methods behave like TD(0).
Furthermore, the update for the second set of weight needs importance sampling in the off-policy case and adapting the algorithm to deep learning is not straightforward~\citep{silver2013gradient}. Thus these methods, while enjoying some theoretical guarantees, are not practical for most modern settings and are quite different from FR.
FR and TN, by working directly in the function space, avoid complications related to being used with deep neural networks. However neither FR nor TN enjoy strong convergence guarantees in presence of the deadly triad as we have shown above.

\subsection{From value estimation to control}

While in the previous subsections we have discussed the value estimation case, we can also use FR for control, for instance with Q-learning. As the update in Q-learning can be seen as a TD update where the next Q-value is sampled from the greedy policy, FR can straightforwardly be adapted for control.
We illustrate the practical difference between using FR or TN for control in \Cref{alg:q_learning}.

\begin{algorithm}
\caption{Deep Q-Network (DQN) Algorithm with TN or FR}\label{dqn}
\begin{algorithmic}[1]
\State Initialize replay memory $\gD$ with capacity $D$
\State Initialize action-value function $Q$ with random weights $\theta$
\State Initialize target weights $\bar{\theta} \leftarrow \theta$

\For{episode = 1 to $M$}
    \State Initialize state $s \sim \mu$
    \For{t = 1 to $N$}
        \State With probability $\epsilon$ select a random action $a$, otherwise $a \leftarrow \text{argmax}_a Q_\theta(s, a)$
        \State Execute action $a$ in environment, observe reward $r$ and next state $s'$
        \State Store transition $(s, a, r, s')$ in $\gD$
        \State Sample random mini-batch of transitions $(s_j, a_j, r_j, s_j')$ from $\gD$
        \State \textit{// Set the target}
        \State \textcolor{TNorange}{TN: $y_j \leftarrow   r_j + \gamma \max_{a'} Q_{\bar{\theta}}(s_j', a')$}
        \State \textcolor{FRblue}{FR: $y_j \leftarrow r_j + \gamma \max_{a'} Q_{\theta}(s_j', a')$}
        \State \textbf{If} $s_j'$ is terminal \textbf{then }$y_j \leftarrow r_j$
        \State \textit{// Define the loss}
        \State \textcolor{TNorange}{TN: $\ell(\theta) = \tfrac{1}{2} (y_j - Q_\theta(s_j, a_j)\big)^2$}
        \State  \textcolor{FRblue}{FR: $\ell(\theta) = \tfrac{1}{2} \big(\ceil{y_j} - Q_\theta(s_j, a_j)\big)^2 +  \tfrac{\kappa}{2}\big( Q_{{\theta}}(s_j, a_j) -  Q_{\bar{\theta}}(s_j, a_j)\big)^2$}
        \State Perform a gradient descent step on $\ell(\theta)$ with respect to $\theta$
        \State Every $T$ steps set $\bar{\theta} \leftarrow \theta$
        \State{\textit{// Update state}} 
        \State \textbf{If} $s'$ is terminal \textbf{then} $s \sim \mu$, \textbf{else} $\ s \leftarrow s'$
    \EndFor
\EndFor
\end{algorithmic}
\label{alg:q_learning}
\end{algorithm}

\looseness=-1 In \Cref{alg:q_learning} we provide code for a simple Q-learning agent using either Target Networks or Functional Regularization. Note that the change is minimal and adds little complexity. For our experiments in \Cref{sec:4room_results}, \Cref{sec:atari_suite} and \Cref{sec:atari_sensitivity} we use a DQN variation~\citep{mnih2013playing} of this algorithm which include sampling mini-batches from a replay buffer, using the Adam optimizer~\citep{kingma2014adam} and using an $\epsilon$-decay.

\section{Experiments}
\label{sec: experiments}

\looseness=-1 In this section, we investigate empirically the differences between FR, TN and TD(0). First, we aim to understand the convergent and divergent behaviors of these algorithms. We use a novel visualization to give intuition and validate our theoretical results.
Finally, we investigate if the use of the additional regularization weight $\kappa$ combined with the network update period $T$ can lead to better performance compared to vanilla Deep Q-learning (DQL) and to merely tuning the network update period $T$ for TN for varying degrees of off-policiness and discount factor on the Four Room environment~\citep{sutton1988learning} and the Atari benchmark~\citep{bellemare2013arcade}.

\subsection{Simple 2-state MDP - Off Policy Evaluation}
\label{exp:small_mdp}
\subsubsection{Experimental Set-Up}
\looseness=-1 In this subsection, we visualize the behavior of TN and FR on a simple MDP. This illustrates the theory developed in Section~\ref{sec:FR} on the convergence properties of both algorithms. \citet{baird1995residual, tsitsiklis1996feature} and \citet{kolter2011fixed} introduced simple reward-less MDPs with few states on which TD(0) can exhibit divergence. Similarly, we define a 2-state MDP in Figure~\ref{fig:small_MDP}, which is conceptually close to ~\citet{tsitsiklis1996feature}. Each state transitions to the other with probability $0.95$ or transitions to itself with probability $0.05$.
We use 1-d features for these two states: $\phi(s_0)$ and $\phi(s_1)$. We denote the off-sampling distribution of $s_0$ by $d(s_0)$ and by $\pi(s_0)$ the on-policy sampling distribution of $s_0$ which is equal to $1/2$ in this specific case. We use $\gamma = 0.995$.

\looseness=-1 Instead of just exhibiting a single instance $D, \Phi$ for which we have divergence, we showcase the convergent or divergent behavior of TD(0), TD(0) with TN and TD(0) with FR \emph{for all combinations} of $D$ and $\Phi$ simultaneously. The Euclidean norm of $\Phi$, $\| \Phi \|_2$ does not impact the convergence of any of the aforementioned algorithms, as $\Phi$ can be renormalized if we scale the learning rate $\eta$ by its squared norm. Therefore, only the angle of the vector $\Phi = [\phi(s_0), \phi(s_1)]$ is needed to describe all representations. In our context, there are only two states, and thus the off-policy distribution is entirely determined by $d(s_0)$. Thus, one can represent the set of all representations and all off-policy distributions $(D, \Phi)$ as a disk of radius $1$~\footnote{Note that we technically only need a half-disk to illustrate the convergence behavior, but we plot the whole disk for aesthetic reasons.}. In polar coordinates, the radius of a point on the disk represents $d(s_0)$ while its angle is the one of the vector $\Phi$, i.e., $\varphi(\Phi) = \arctan \tfrac{\phi(s_1)}{\phi(s_0)}$. %

\begin{figure*}[t!]
    \centering
     \begin{subfigure}[b]{0.2\textwidth}
         \centering
         \begin{tikzpicture} [node distance = 1.5cm, on grid, auto]
 
\node (q0) [state] {$s_0$};
\node (q1) [state, right = of q0] {$s_1$};
 
\path [-stealth, thick]
    (q0) edge [loop above]  node {$0.05$}()
    (q1) edge [loop above]  node {$0.05$}()
    (q0) edge [bend left] node [above] {$0.95$} (q1)
    (q1) edge [bend left] node [below] {$0.95$} (q0);
\end{tikzpicture}
         \caption{\textbf{2-state MDP}}
          \label{fig:small_MDP}
     \end{subfigure}
     \begin{subfigure}[b]{0.2\textwidth}
         \centering
         \includegraphics[width=\textwidth]{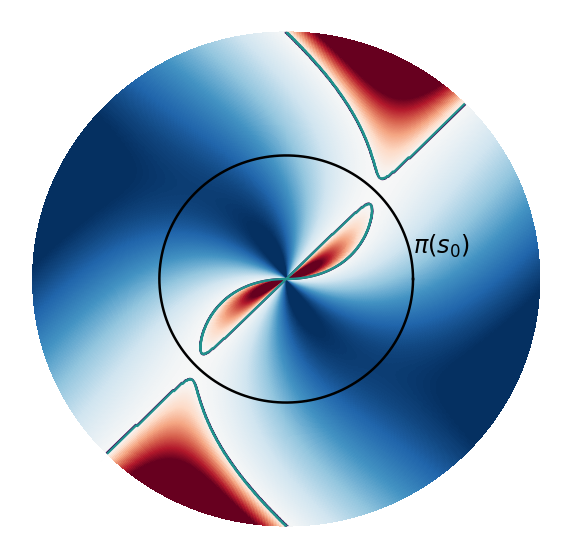}
         \caption{\textbf{TD(0)}}
         \label{fig:circle_td}
     \end{subfigure}
     \begin{subfigure}[b]{0.2\textwidth}
         \centering
         \includegraphics[height=\textwidth]{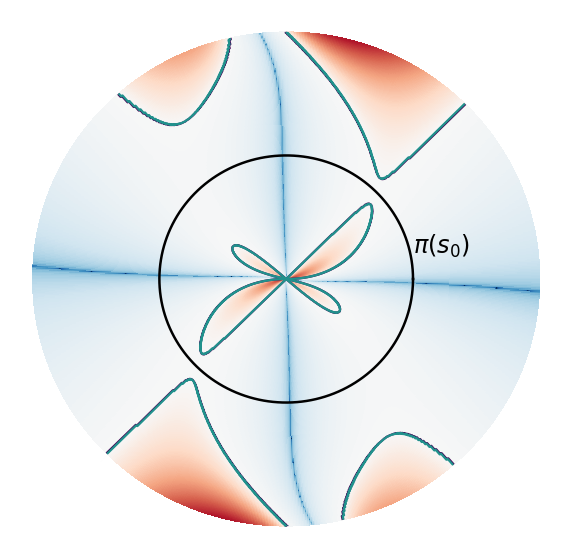}
         \caption{\textbf{TN}}
         \label{fig:circle_tn}
     \end{subfigure}
     \begin{subfigure}[b]{0.2\textwidth}
         \centering
         \includegraphics[height=\textwidth, trim={0 0.7cm 0 0.7cm}]{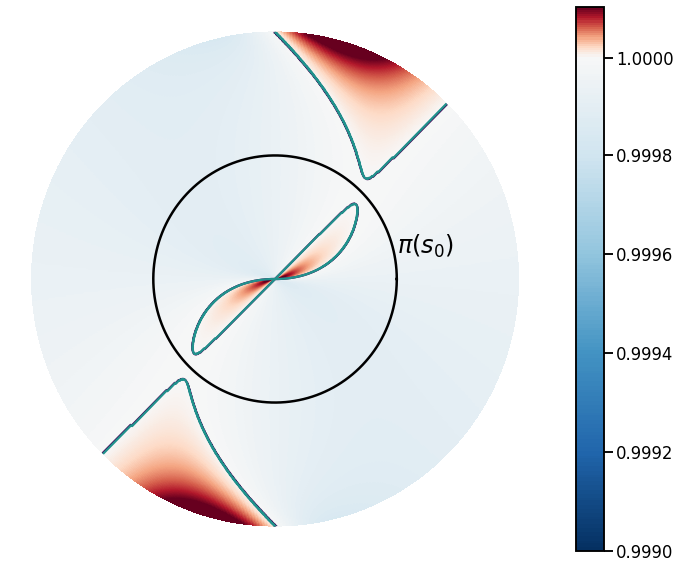}
         \caption{\textbf{FR}}
         \label{fig:circle_fr}
     \end{subfigure}
    \caption{\textbf{Convergence rate for different off-policy and feature combination in a 2-state MDP.} We show here the spectral radius for TD(0) (b) only,  (c) with Target Network, and (d) with Functional Regularization on the simple MDP (a). For any point in the disk, its radius represents the off-policy distribution of $s_0$ while the angle is the same as the vector $[\phi(s_0), \phi(s_1)]$. The color represents the value of the spectral radius for each method while the contour is the frontier $\rho=1$ separating the convergence domain (shades of blue) from the divergent domain (shades of red). The black circle has a radius $\pi(s_0)$, the stationary distribution of $P$ for which TD is guaranteed to converge as it would be on-policy. The fact that Target Networks might cause policy evaluation to diverge for some features where simple TD converges is a surprising and important limitation of Target Networks given their ubiquity.}
    \label{fig:circles}
\end{figure*}

\subsubsection{Results}

We draw the above-mentioned disks for the algorithms: TD(0), TD(0) with TN, and TD(0) with FR (Figure~\ref{fig:circle_td}, ~\ref{fig:circle_tn}, and~\ref{fig:circle_fr}). On these disks, the colors represent the spectral radius $\rho$ of the iteration matrix for each algorithm. Recall that $\rho < 1$ (blue) implies convergence, while $\rho > 1$ (red) implies divergence to $+\infty$. We plot the contour lines $\rho = 1$ to facilitate visualization. Additionally, we draw the circle of radius $d(s_0) = \pi(s_0)$ (in black), which denotes the stationary distribution. As predicted by theory, TD(0) converges on this circle for all representations. We use the regularization coefficient $\kappa=1.5$ and target update period  $T=10,000$ for TD(0) with TN and TD(0) with FR. 

As seen in Figure~\ref{fig:circles}, FR (\Cref{fig:circle_fr}) and TN (\Cref{fig:circle_tn}) show less extreme values (lighter blue) compared to TD(0) (\Cref{fig:circle_td}). This finding implies that both convergence and divergence are slowed down, which is expected for regularization methods. 

We recover the classical divergence example for TD(0)~\citep{tsitsiklis1996feature, van2018deep}: if $\phi(s_1) > \phi(s_0) > 0$ and the behavior distribution $d(s_0)$ is close to 1, the point ($d(s_0), \varphi(\Phi)$) in polar coordinates lies in the large upper red region of Figure~\ref{fig:circle_td}, thus TD(0) will diverge. 

\looseness=-1 As proved in Proposition~\ref{prop:conv_fr}, the convergence domain of TD(0) with FR is similar to the one of TD(0). Interestingly, and as shown in~\Cref{theorem:TN_divergence}, TD(0) with TN may diverge in regions where TD(0) with FR does not. For example, in Figure~\ref{fig:circle_tn} we see four smaller (symmetric) additional divergence regions (red) in the lower right and upper left regions of the disk for $d(s_0)$ close to $1$ and two symmetric ones near the center of the disk. This can happen when  $\Phi^\top D P^\pi\Phi$ which appears in the implicit regularization of TN has negative eigenvalues.

\begin{figure*}[t!]
    \centering
     \begin{subfigure}[b]{0.22\textwidth}
     \centering
     \includegraphics[width=\textwidth]{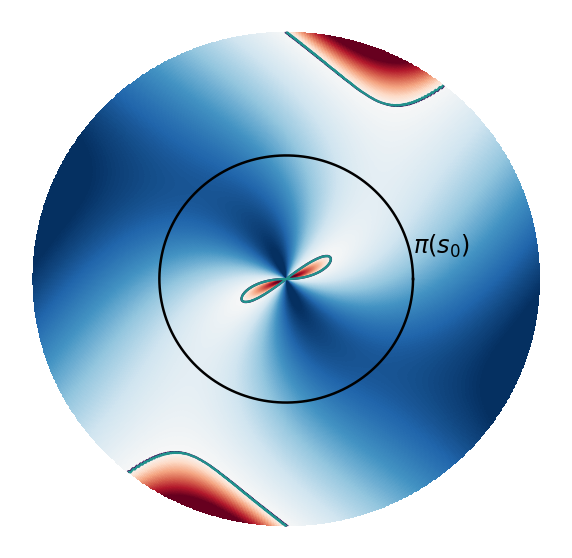}
     \caption{$\gamma=0.8$}%
     \label{fig:circle_td_0.8}
     \end{subfigure}
     \begin{subfigure}[b]{0.22\textwidth}
         \centering
         \includegraphics[width=\textwidth]{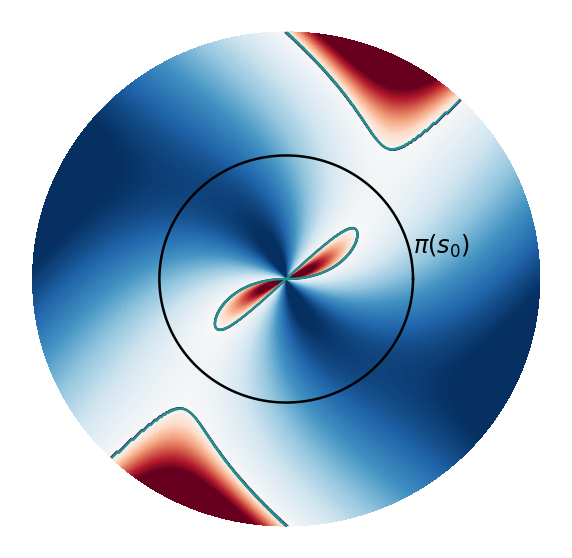}
         \caption{$\gamma=0.95$}
         \label{fig:circle_td_0.95}
     \end{subfigure}
     \begin{subfigure}[b]{0.22\textwidth}
         \centering
         \includegraphics[width=\textwidth]{figures/new_circles/disk_td.png}
         \caption{$\gamma=0.995$}
         \label{fig:circle_td_0.995}
     \end{subfigure}
     \begin{subfigure}[b]{0.22\textwidth}
         \centering
         \includegraphics[width=\textwidth]{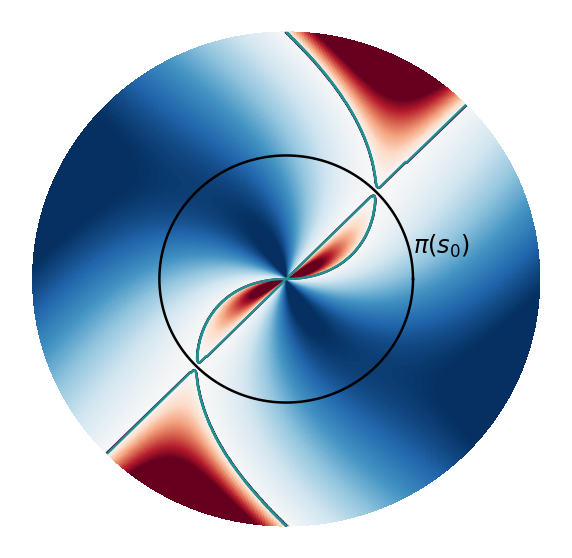}
         \caption{$\gamma=0.9995$}
         \label{fig:circle_td_0.9995}
     \end{subfigure}
    \caption{\looseness=-1 \textbf{Influence of the discount factor $\gamma$ on TD(0) divergence regions.} As we increase $\gamma$, the area of the divergence regions (in red) increases. Specifically, as $\gamma$ approaches 1, the divergence regions almost extend to the on-policy case, meaning that some behavior policies close to the on-policy distribution that might have been stable for lower $\gamma$ become unstable for higher $\gamma$.
    A similar behavior is also observed for TN and FR.}
    \label{fig:2state_gammas}
\end{figure*}

\looseness=-1 \textbf{Impact of ``Off-Policyness'':} As previously reported by \citet{kolter2011fixed} and \citet{scherrer2010should}, we also observe that when the data collection is on-policy (black circle of radius $\tfrac{1}{2}$) for every method and every discount factor $\gamma$ the method does not diverge. However as we move away from the on policy stationary distribution (increasing or decreasing the radius) the methods have divergence zones.

\looseness=-1 \textbf{Impact of the discount factor $\gamma$:}   It is known that $\gamma$ plays a strong role in the divergence of TD methods~\citep{kolter2011fixed, scherrer2010should}. In \Cref{fig:2state_gammas} we provide an exact illustration of this relationship between the discount factor $\gamma$ and the convergence or divergence behavior of Temporal Difference methods in our simple MDP. Intuitively, as the divergence issues stem from the fact that $\gamma P^\pi$ is non-symmetric, we can expect $\gamma \to 0$ to be stable as our problem would simplify into a weighted least squares problem. On the other hand, the divergence issues could be made worse for higher $\gamma$. This intuition is validated on \Cref{fig:2state_gammas} where we can see that the divergence regions (in red) increase with $\gamma$ for TD(0). We provide similar figures for FR and TN in \cref{app:fig_simple_mdp}. Interestingly, as on-policy TD(0) is always stable, we observe that for smaller $\gamma$ we can afford to be more off-policy than for higher $\gamma$ where any small deviation from the on-policy distribution could lead to divergent behavior, depending on the features $\phi$. 

\looseness=-1 These depictions of the convergence or divergence behaviors of TD(0) with FR and TD(0) with TN hint at the fact that TN-based algorithms may be more unstable than FR-based algorithms for strongly off-policy distributions. This hypothesis is studied empirically in the next subsection.

\begin{figure}[p] %
  \begin{minipage}[b]{\textwidth}
    \centering
    \includegraphics[width=0.9\textwidth]{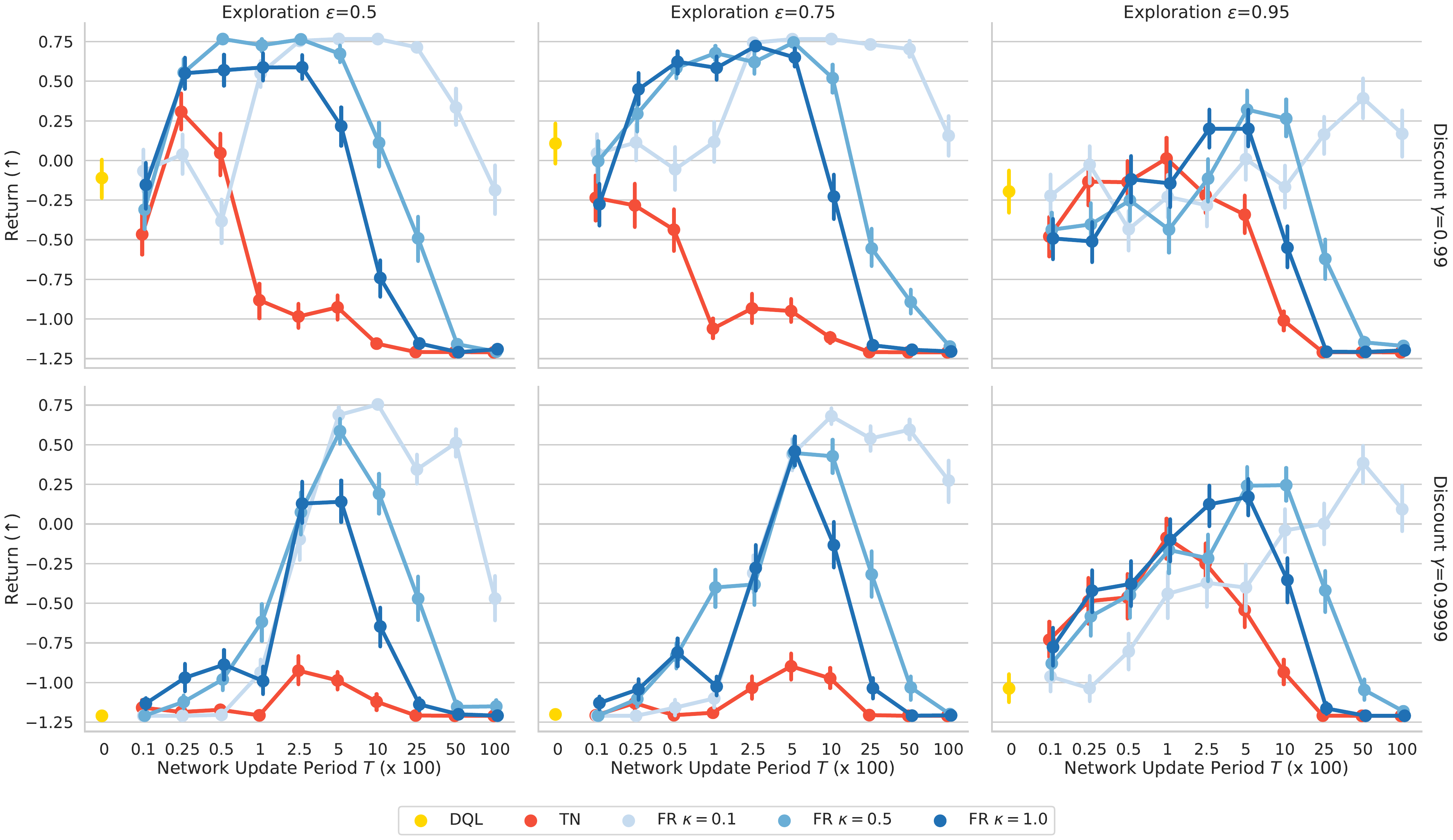}
    \subcaption{\textbf{Final return for different hyper-parameters (Higher better)} %
    We report the average performance over 100 episodes for 40 seeds on the Four Room environment. We observe that FR outperforms TN for most combination of update period $T$ and regularization weight $\kappa$ across every scenario studied.}
    \label{fig:4room_return}
  \end{minipage}
  \vfill %
  \begin{minipage}[b]{\textwidth}
    \centering
    \includegraphics[width=0.9\textwidth]{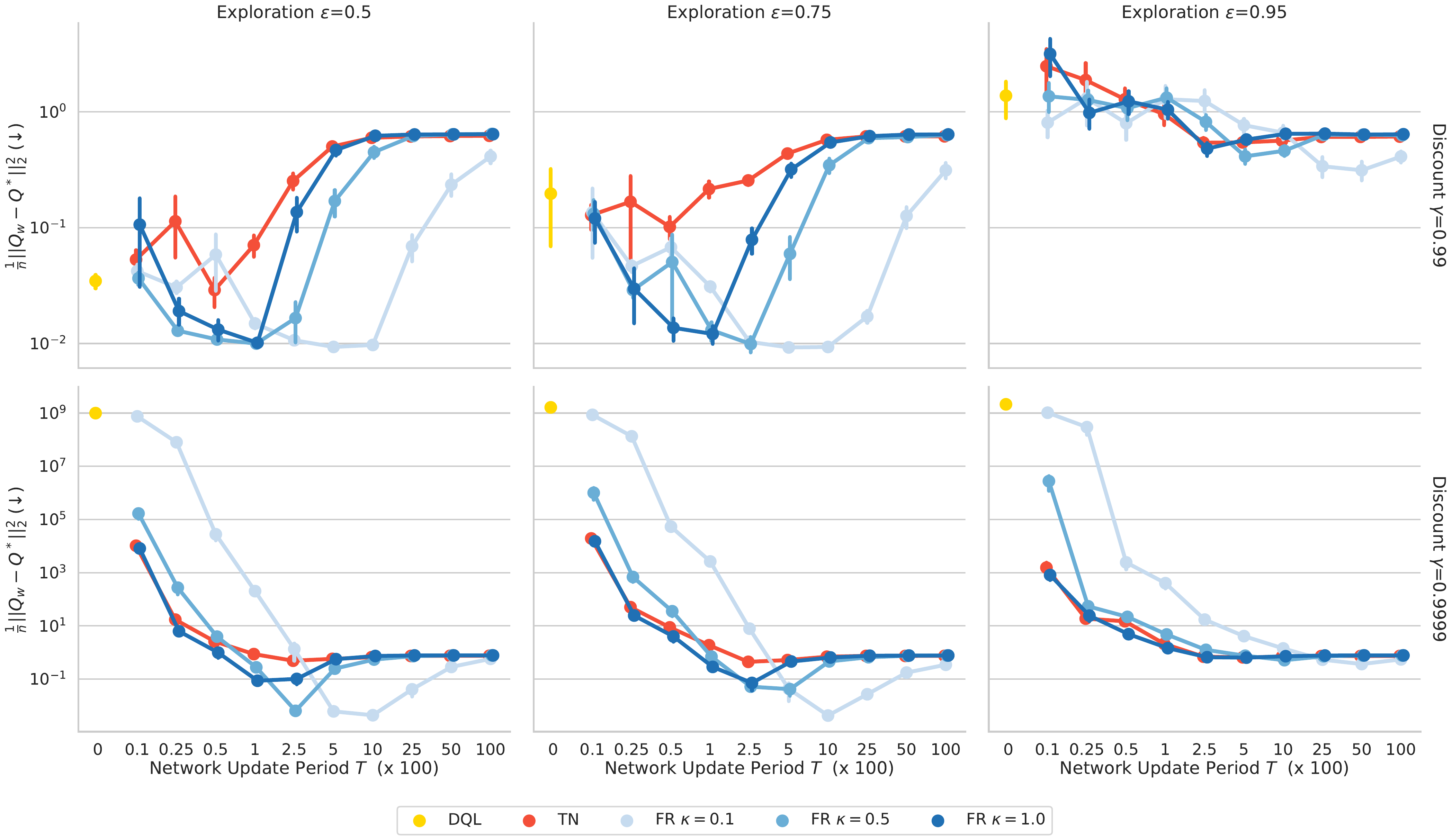}
    \subcaption{%
    \textbf{Final loss for different hyper-parameters (Lower better)} 
    We report the difference between the $Q$-value obtained using function approximation $Q_\theta$ and the optimal $Q$-value $Q^*$ estimated using a tabular method. We can see that as the error decreases the performance increases (\cref{fig:4room_return}). We also observe that FR $\kappa=0.1$ obtains lower $Q$ errors than TN for every scenario studied. %
    }
    \label{fig:4room_qerror}
  \end{minipage}
  
  \caption{Four Room Ablation Study}
  \label{fig:4room}
\end{figure}

\subsection{Four Rooms Results}
\label{sec:four_room}

\subsubsection{Experimental Set-Up}

In this section, we study the behavior of vanilla DQL, TN, and FR in the Four Room environment~\citep{sutton1999between}. The environment is comprised of four rooms divided by walls 
with a single gap in each wall allowing an agent to move between rooms. The environment size is 11 $\times$ 11. The agent's position is given by a one-hot encoding, and its available actions are up, down, left, and right. The agent starts in the bottom right position and must reach the goal in the top left position to obtain a reward of 1. At each step that the agent does not reach the goal, it receives a small penalty of $0.01$. The episode terminates after 121 (=$11^2$) steps, thus an agent that never reach the goal would have a return of $-1.21$. %

\looseness=-1 We investigate how varying the discount factor $\gamma$ and exploration level $\epsilon$ can lead to divergence in both their $Q$-value error and results. Specifically, the data is collected under a behavior policy that takes a random action with probability $\epsilon$ and takes $\rva= \argmax_\rva Q_w(\rvs, \rva)$ with probability $1-\epsilon$. As $\epsilon$ tends towards 1, the data collection becomes more off-policy and makes the $Q$-value estimation more difficult. Similarly, as shown above, increasing the discount factor $\gamma$ increases the difficulty of the bootstrapping and can lead to divergence.

\looseness=-1 We collect 10000 environment transitions and perform the same number of gradient steps. At the end of training, we report the average per-episode regret for 100 episodes collected under $\epsilon_{\text{eval}}-$greedy policy with $\epsilon_{\text{eval}}=0.1$ as it is common for RL agents in deterministic environments such as the Atari suite. Furthermore, to better understand the behavior of the algorithms studied, we measure $\frac{1}{n}\|Q_w - Q^*\|^2_2$, where $Q^*$ was obtained using tabular $Q$-learning, $Q_w$ is a DNN approximation (details in~\cref{hp:four_rooms}), and $n$ is the number of state-action pairs. We repeat the evaluation for 40 seeds. We investigate the performance of vanilla DQL, TN and FR under 6 different combinations of discount factor $\gamma \in \{0.99, 0.9999\}$ and exploration $\epsilon \in \{0.5, 0.75, 0.95\}$. %

\subsubsection{Results}
\label{sec:4room_results}

\looseness=-1 The returns and the $Q$-value errors for the Four Room environment can be observed in \cref{fig:4room_return} and \cref{fig:4room_qerror} respectively.

\textbf{Increasing exploration $\epsilon$.} 
Increasing exploration $\epsilon$ from 0.5 to 0.75 does not result in loss of performance or worst $Q$-value approximation for the FR algorithms (in shades of {\color{blue}blue}). However, TN's ({\color{red}red}) performance and its $Q$ value approximation degrade.
Increasing exploration $\epsilon$ past 0.75 to 0.95 decreases FR's performance and $Q$-value approximation. Interestingly, for exploration $\epsilon=0.95$ slightly increases TN performances, but decreases its $Q$-value approximation accuracy. The performance of vanilla DQL ({\color{yellow}yellow}) stays fairly constant as we increase exploration $\epsilon$, but for discount $\gamma=0.99$ its $Q$-value error increases, while it diverges for every exploration level for discount $\gamma=0.9999$.

\looseness=-1 \textbf{Increasing the discount factor $\gamma$.} For smaller exploration $\epsilon$, increasing the discount factor $\gamma$ from 0.99 to 0.9999 results in much worst performance and $Q$-value divergence for small network update periods $T$. While several different combinations of network update period and regularization resulted in best performance for discount factor $\gamma=0.99$, large network update periods and small regularization weights result in better performance and smaller $Q$-error for $\gamma=0.9999$. The performance and $Q$-value approximation of TN and vanilla DQL degrades greatly. For exploration $\epsilon=0.95$, increasing the discount factor also causes the $Q$-value to diverge for smaller network update periods $T$, but interestingly does not result in a large decrease in performance for larger network update period $T$ for both TN and FR, while the performance of vanilla DQL collapses.

\looseness=-1 \textbf{Overall performance.} In~\cref{fig:4room_return}, for discount rate $\gamma=0.99$, there are a number of regularization weights $\kappa$ and network update periods $T$ hyper-parameters for which FR reaches optimal or near optimal performance, suggesting that our method is not particularly sensitive to the choice of the hyper-parameter $\kappa$. However, \textbf{for exploration $\epsilon \leq 0.75$ and for discount rate $\gamma=0.9999$, we observe that optimal performance and $Q$-value approximation can only be reached through a combination of regularization weight $\kappa$ smaller than 1 and large network update period $T$. Highlighting the benefits of tuning the regularization weight $\kappa$ and network update period $T$ for FR compared to simply tuning the network period update for TN.} Furthermore, we note that in every settings studied, FR outperforms both vanilla DQL and TN. Finally, for every method we observe an inverted ``U'' shape that highlight the regularization trade-off where too little regularization results in divergence, while too much regularization results in slow learning.

\begin{figure}
    \centering
    \includegraphics[width=\textwidth]{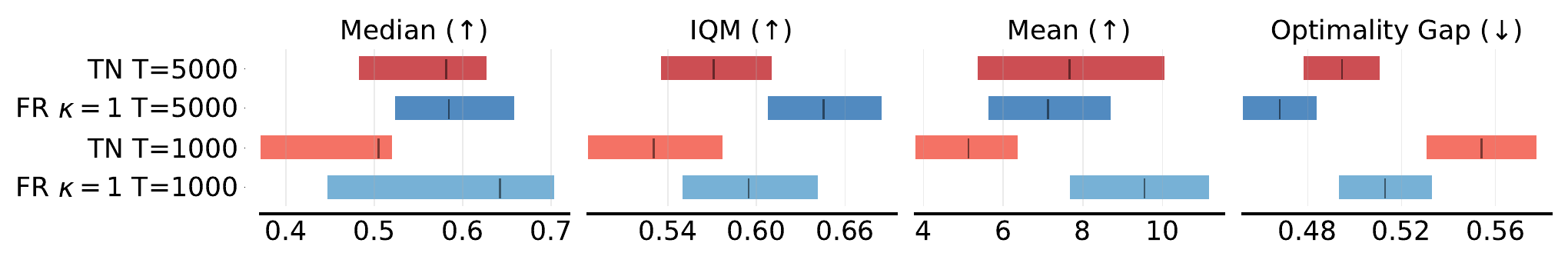}
    \caption{\textbf{Atari Suite Human Normalized Score.} Different metrics for the Human Normalized Score with their 90\% confidence interval on the whole Atari suite.}
    \label{fig:all_stats_atari}
\end{figure}

\subsection{Atari Suite}
\label{sec:atari_suite}
\subsubsection{Experimental Set-Up}

In this section, we investigate if FR without additional tuning ($\kappa=1$) can be used as a drop-in replacement for TN on the Arcade Learning Environment (ALE) \citep{bellemare2013arcade}.
We use the \texttt{CleanRL} library~\citep{huang2022cleanrl}, and we run each algorithm for 10M steps and use 7 seeds for each game in the suite which amounts to a total of approximately $60,000$ GPU hours. We keep every hyper-parameters to their default value with the exception of the network update period $T$ that we varied.

While all learning curves for all games are available in~\cref{fig:atari_learning_curves}, we use the \texttt{rliable} library and the robust metrics introduced in~\citep{agarwal2021deep} to analyze results over all games. In~\cref{fig:all_stats_atari}, we report the usual mean and median scores, as well as the robust metrics interquartile mean (IQM) and optimality gap as well as their 90\% confidence interval. The \textbf{IQM} is the mean where the top and bottom $25\%$ of the runs are discarded. It can be understood as halfway between the mean and the median as the mean would not discard any runs and the median would discard the top and bottom $50\%$ of the runs to only keep the median. An additional robust metric is the \textbf{optimality gap} which measures how far the algorithm is from reaching a score of $1$, the average human score, on all games.

\subsubsection{Results}

\textbf{Network update period $T=1,000$.} We first compare FR $\kappa=1$ to TN with the default network update period $T=1,000$. We observe that the mean is significantly higher for FR $\kappa=1$ (light blue) than for TN (light red). Furthermore, the average median and the IQM are higher for FR than TN, but their distribution are somewhat overlapping, we also note that the optimality gap for FR is noticeably better than the one achieved by TN. 

\textbf{Network update period $T=5,000$.} We then compare FR $\kappa=1$ to TN for the best network update period $T=5,000$ obtained for TN in~\cref{sec:atari_sensitivity}. We observe that the mean and median distributions are mostly overlapping for TN (dark red) and FR $\kappa=1$ (dark blue). While we cannot claim statistical significance for our sample size, we note that the IQM is higher and optimality gap is lower for FR $\kappa=1$ than TN. Furthermore, we note that despite FR $\kappa=1$ $T=5000$ does not significantly improve upon TN $T=5000$ for our sample size, we observed noticeable performance improvement on the following games: atlantis, enduro, kung fu, zaxxon, crazy climbers, seaquest, frostbite, and robotank.

\textbf{Overall performance.} We conclude that for the network update periods tested, \textbf{FR matches or outperforms TN on most games of the ALE without additional hyper-parameter tuning (setting $\kappa=1$)}. Further performance improvement could potentially be reached by tuning $\kappa$ in concert with the target network update period $T$ as done in the Four Room environment.

\subsection{Sensitivity Analysis of the Regularization Parameters on a Subset of Atari Games}
\label{sec:atari_sensitivity}

\begin{figure}[p] %
  \begin{minipage}[b]{\textwidth}
    \centering
    \includegraphics[width=0.9\textwidth]{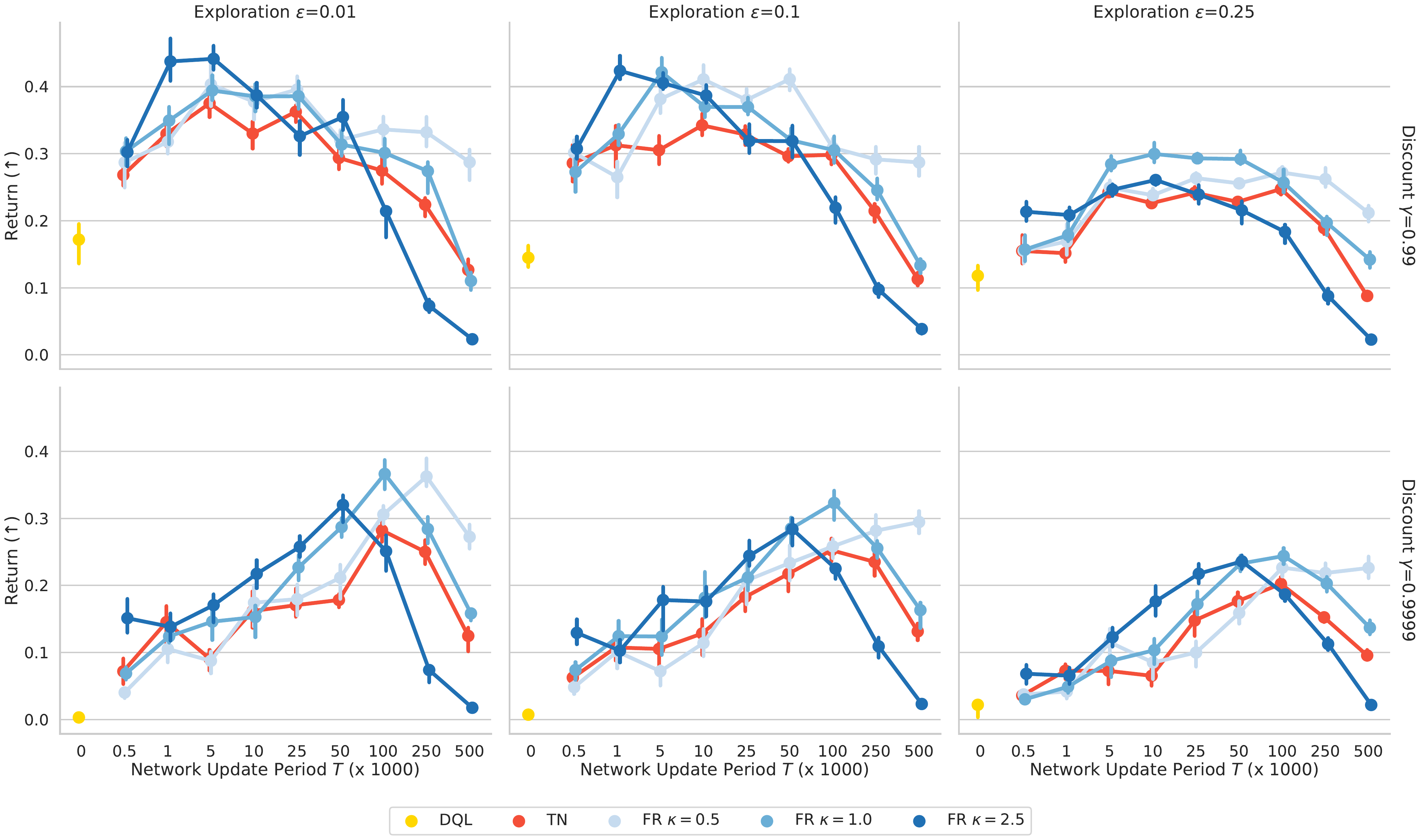}
    \subcaption{\textbf{Final return for different hyper-parameters  (Higher better)} We observe that increasing the exploration rate $\epsilon$ and the discount rate $\gamma$ result in performance decrease for every algorithms. However, we note that a mix of larger network update period $T$ and regularization weight $\kappa$ mitigates the performance decrease.}
    \label{fig:atari_return}
  \end{minipage}
  \begin{minipage}[b]{\textwidth}
    \centering
    \includegraphics[width=0.9\textwidth]{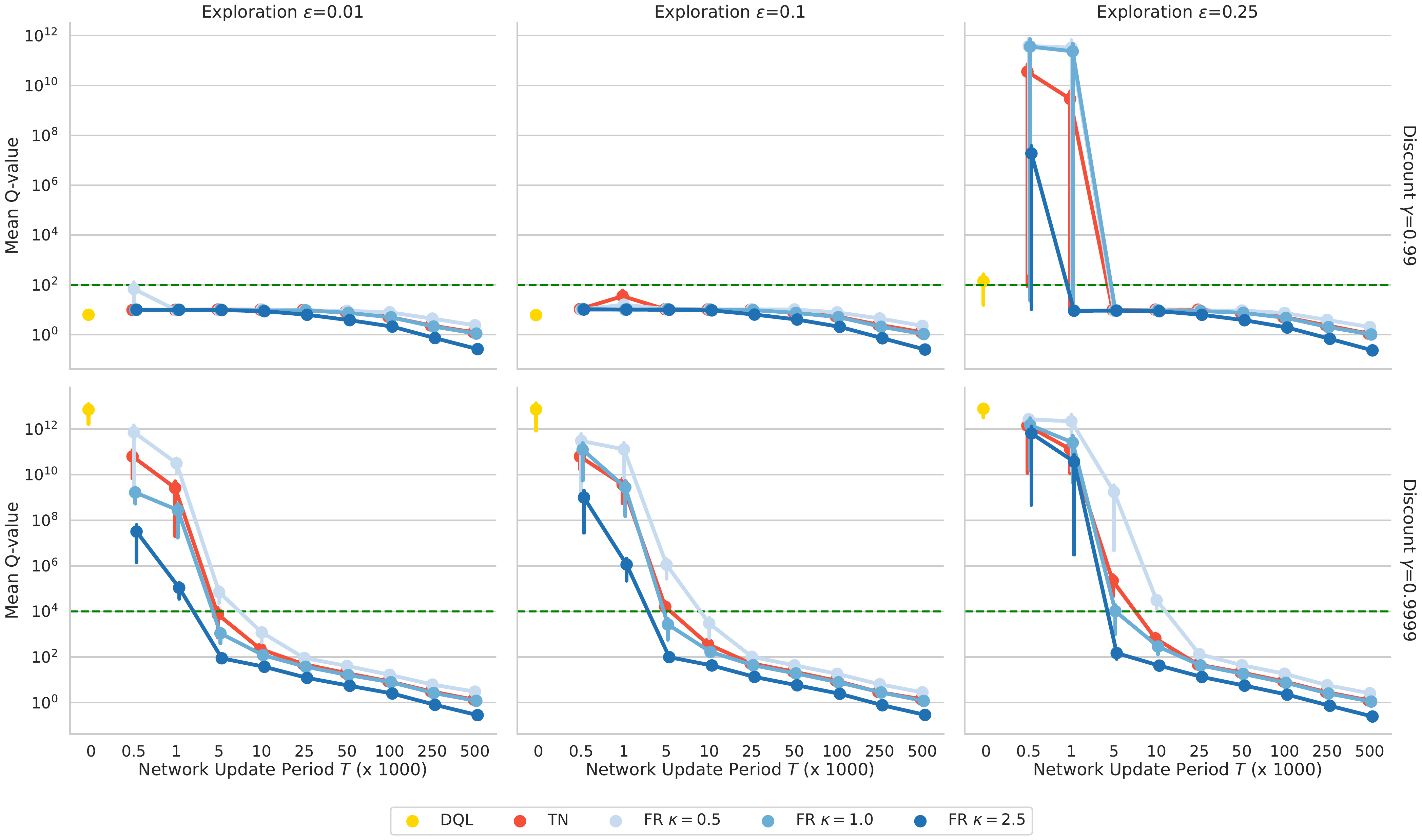}
    \subcaption{\textbf{Final predicted mean $Q$ value (The green dotted line denotes divergence)} %
    Mean Q values larger than the green line are considered diverging. We report the mean $Q$-values over the 6 environments studied. We observe that larger discount rates and more exploratory policies increase the (soft-) divergence problem. We also observe that increasing network update period $T$ or the regularization weight $\kappa$ mitigate the divergence problem.
    }
    \label{fig:atari_qpred}
  \end{minipage}
  \caption{Atari Ablation Study}
  \label{fig:atari}
\end{figure}

\subsubsection{Experimental Set-Up}

\looseness=-1 In this section, we investigate if the findings from the previous sections hold for diverse and challenging environments from the Arcade Learning Environment~\citep{bellemare2013arcade}. Specifically, we investigate the behavior of deep $Q$-learning under the following 6 environments from the orignal DQN paper~\citep{mnih2013playing}: Seaquest, Breakout, Space Invaders, Enduro, Qbert and Beam Rider\footnote{Excluding Pong since every algorithm achieves near perfect score.}. For each environment, we decay the probability of a random action from 1 to $\epsilon$ and the discount factor $\gamma$ use to train the Q-value. We report the results for different $\gamma$ and $\epsilon$ since they can both increase instability and result in divergence. Unless specified otherwise, we use the default hyper-parameters from the CleanRL library \citep{huang2022cleanrl}.

\looseness=-1 We study the average return of the $\epsilon_{\text{eval}}$-greedy policy after 10M environment steps, where $\epsilon_{\text{eval}}=0.05$. Furthermore, we report the mean $Q$-value at the end to study divergence. In the next section, we report the results averaged over the 6 games for 5 seeds each for a total of \emph{~30,000 GPU hours}. To combine the results of various games, we standardized the scores of each game. This involved comparing the scores obtained from playing each game with a random policy to the highest scores achieved by any of the policies analyzed. Non-standardized game results are provided in the appendix. %

\subsubsection{Results}
\label{sec:atari_results}
The returns and the mean $Q$-values prediction for the normalized Atari environments can be observed in \cref{fig:atari_return} and \cref{fig:atari_qpred} respectively.

\textbf{Increasing exploration $\epsilon$.} 
In~\cref{fig:atari_return}, for discount rate $\gamma=0.99$, we observe that for $\epsilon=0.01$ a large regularization weight $\kappa=2.5$ and small network update period $T$ results in the best performance. Increasing exploration $\epsilon$ to 0.1, the performance of FR is fairly similar for $\kappa \in \{0.5, 1.0\}$ and TN, but slightly worst for $\kappa=2.5$. For $\epsilon=0.25$, the performance of every method studied drop, and $\kappa=1$ achieves the highest performance. Furthermore for all methods, we observe an inverted ``U'' shape appearing as we increase $T$ highlighting the regularization trade-off. Interestingly for smaller exploration $\epsilon$, the algorithms studied do not diverge, but for exploration $\epsilon=0.25$ we observe that for smaller network update period $T$, vanilla DQL, TN and FR now diverge. For discount rate $\gamma=0.9999$, we also observe that increasing the exploration $\epsilon$ past 0.1 decreases overall performance, and that the shape of the return curves exhibit a similar inverted ``U'' shape. As we increase $\epsilon$, for each regularization weight $\kappa$, we observe that the best performing network period update $T=100000$ does not change. We further observe that regularization weight $\kappa$ and network period update $T$ are both effective at reducing $Q$-value divergence. For example, for $\epsilon=0.25$ and $T=5000$, FR $\kappa\in\{0.5, 1.\}$ diverge, while FR $\kappa=2.5$ does not. \textbf{Overall, increasing exploration $\epsilon$ decreases performance and can result in soft-divergence of the $Q$-values.}

\textbf{Increasing the discount factor $\gamma$.} %
We observe that for a discount rate of $\gamma=0.99$, both TN and FR exhibits consistent performance across a broad range of regularization values $\kappa$ and network update periods $T$. Moreover, we observe that with a discount factor of $\gamma=0.99$ and exploration $\epsilon < 0.25$, the $Q$-values do not diverge. However, for $\epsilon=0.25$, a combination of low regularization weight $\kappa$ and update period $T$ leads to divergence. We again note that for a fixed network update period e.g., $T=1000$, increasing the regularization weight $\kappa$ prevent divergence. As the discount rate is increase to $\gamma=0.9999$, we observe a consistent drop in performance. We also observe, that this higher discount rate makes the model more sensitive to regularization hyper-parameters, yielding a more pronounced inverted ``U'' shape that clearly illustrates the regularization trade-off. Furthermore, we observe that increasing $\gamma$ causes the $Q$-value to diverge for DQL, FR, and TN when using smaller network update periods $T$. We observe that increasing the update period $T$ and regularization $\kappa$ can prevent this divergence. \textbf{Overall, increasing the discount factor $\gamma$ can degrade performance and lead to divergence for smaller $T$. For larger $T$, we observe small degradation in performance and no divergence.}

\textbf{Overall performance.} We find that best overall performance is attained when the regularization weight $\kappa=2.5$ and the network period update, $T$, is small. This emphasizes the significance of the additional regularization weight, $\kappa$, for achieving good results with FR. Furthermore, while their performance profile is similar, \textbf{ we observe that FR with $\kappa=1$ matches or slightly outperforms TN in all scenarios examined, indicating that FR without tuning $\kappa$ can effectively serve as a substitute for TN in the Atari suite and to match its performance or result in performance improvement.}

\section{Related work}

\looseness=-1 Multiple previous works have investigated how to improve value estimation through various constraints and regularizations. \citet{shao2020grac} proposed adding an additional backward Squared Bellman Error loss to the next $Q$-value to stabilize training and remove the need for a target network on some control problems. 
\citet{massoud2009regularized} is perhaps the closest work to our own, they regularize the reproducing kernel Hilbert space (RKHS) norm of the $Q$-value estimates, i.e., penalizing $\kappa ||Q_w||_{\gH}^2$. 
However, penalizing the magnitude of the $Q$-values would prevent the algorithm from converging to $Q^*$ if $\kappa$ does not tend towards $0$. Penalizing the output of a DNN draws connections with popular regularization methods in the field of policy optimization which were inspired by trust-region algorithms \citep{schulman2015trust, abdolmaleki2018maximum, abdolmaleki2018relative} in which the policy is KL regularized towards its past values. In the same spirit, for value-based methods, our proposed Functional Regularization regularizes the $Q$ function estimates towards their past values. Alternatively, \citet{kim2019deepmellow} showed that by using the mellow max operator as an alternative to the max operator used in bootstrapping, it was possible to stabilize training and train without a target network on some Atari games. 

\looseness=-1 Other works have sought to stabilize the $Q$-value estimates by constraining parameter updates, e.g., through regularization~\citep{farebrother2018generalization}, conjugate gradient methods~\citep{schulman2015high}, pre-conditioning the gradient updates~\citep{knight2018natural, achiam2019towards}, or using Kalman filtering~\citep{shashua2019trust, shashua2020kalman}. 
However, weight regularization might be ineffective in DNNs. For neural networks, the network outputs depend on the weights in a complex way and the exact value of the weights may not matter much. What ultimately matters is the network outputs~\citep{benjamin2018measuring, khan2019approximate}, and it is better to directly regularize those. 
For instance, while Polyak's averaging~\citep{lillicrap2015continuous}, a common weight regularization technique, has found success in control problems~\citep{haarnoja2018soft, fujimoto2018addressing}, periodically updating the parameters is usually preferred for complex DNN architectures~\citep{mnih2013playing, hausknecht2015deep, hessel2018rainbow, kapturowski2018recurrent, parisotto2020stabilizing}.

Recently~\citep{zhang2021breaking} studied how Target Networks can help stabilize the TD algorithm, which is seemingly at odds with our \Cref{theorem:TN_divergence}. However, in their paper, they study a variant of DQN with TN which involves projection steps while our analysis focuses on the version used in practice. %

\section{Limitations}
\begin{enumerate}
    \item FR is not guaranteed (theoretically and empirically) to always better than TN. Indeed, for value estimation, TN can, in some cases, stabilize a divergence TD iteration while FR cannot.
    \item Compared to TN, FR adds a new hyperparameter $\kappa$ and needs one additional forward propagation. However we find $\kappa$ easy to tune in practice, the default value $\kappa=1$ being a strong baseline.
    \item \looseness=-1 Our theoretical analysis is limited to the batch value estimation setting with linear function approximation. This means that we do not analyze the stochastic case where states and actions are sampled.
    \item Additional research is required to study the interaction of FR with more sophisticated algorithms such as n-step return and double Q-learning.
\end{enumerate}
\section{Conclusion}
\label{sec: discussion}

\looseness=-1 In this paper, we cast a light on the implicit regularization performed by using Target Networks in RL. We have shown that despite being used as a regularizer Target Networks can cause instabilities while also being inflexible as the regularization depends on the discount factor $\gamma$. To overcome these issues, we introduced Functional Regularization which directly regularizes the value network towards a network parameterized by lagging parameters.

\looseness=-1 We performed extensive ablation studies on TN and FR to understand their behavior on a wide variety of settings. 
Overall decoupling the regularization parameter $\kappa$ results in a more flexible behavior and can lead to better performance. Importantly, we also observe that using the default $\kappa=1$, FR matches or outperforms TN on our Atari experiments. This indicates that FR could be used as a drop-in replacement for TN in deep RL.

\subsubsection*{Broader Impact Statement}
This work is mostly theoretical in nature and we do not foresee direct negative societal impacts. 

\subsubsection*{Author Contributions}
AP had the initial idea, implemented the experiments on Atari and 4 Rooms, and derived a first version of the link between TN and FR (Eq 5). VT derived the theory and implemented the 2-state MDP experiment. RP helped setting up the cluster to conduct the experiments. JM helped with the derivations in a previous version of the paper. GMM, CP and MEK advised on the project.

\bibliography{main}
\bibliographystyle{tmlr}

\appendix

\newpage
\section{Proofs for Linear Function Approximation case}
\label{appendix: derivation}

\subsection{Supporting lemma}
\begin{lemma}[Difference of inverses]\label{lemma:inverse_diff} For $A$ and $B$ two non-singular matrices
$$A^{-1} - B^{-1} = A^{-1} (B-A) B^{-1}$$
\end{lemma}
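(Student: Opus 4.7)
The plan is to verify the identity by a direct algebraic manipulation of the right-hand side, since both sides are matrices in $\mathbb{R}^{n \times n}$ and the only hypotheses are that $A$ and $B$ are invertible. There is no real obstacle here; the result is a standard ``resolvent-type'' identity, and the shortest route is simply to expand the product $A^{-1}(B-A)B^{-1}$ using distributivity of matrix multiplication.

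Concretely, I would start from the right-hand side and distribute $A^{-1}$ on the left and $B^{-1}$ on the right:
\begin{equation*}
A^{-1}(B-A)B^{-1} = A^{-1} B B^{-1} - A^{-1} A B^{-1}.
\end{equation*}
Then I would invoke $B B^{-1} = I$ and $A^{-1} A = I$ (both valid because $A,B$ are non-singular by hypothesis) to collapse the two terms to $A^{-1} - B^{-1}$, which is the left-hand side of the claimed equality.

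As a sanity check I would also note that the argument is symmetric: one could equally well factor on the other side to obtain the companion form $A^{-1} - B^{-1} = B^{-1}(B-A)A^{-1}$, which follows by the same one-line computation and will be useful when applying the lemma to perturbation expressions involving $(\Phi^\top D \Phi)^{-1}$ in the proofs of Propositions~\ref{prop:conv_tn} and~\ref{prop:conv_fr}. Because the proof is a two-step manipulation with no inequality estimates, convergence arguments, or operator-theoretic technicalities, I do not anticipate any difficult step; the only thing to be careful about is not to commute matrices that do not commute, which the chosen left/right factorization avoids.
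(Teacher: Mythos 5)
Your proof is correct and is essentially the same elementary verification as the paper's: you expand $A^{-1}(B-A)B^{-1}$ directly using $A^{-1}A = BB^{-1} = I$, whereas the paper multiplies both sides by $A$ on the left and $B$ on the right and checks they agree, which is the same computation read in the other direction. Your direct expansion is, if anything, slightly cleaner, and the symmetric companion form you note is a valid bonus.
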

\begin{proof}

We multiply each side by $A$ on the left and $B$ on the right and show they are equal.
Left hand term:
\begin{align*}
    A (A^{-1} - B^{-1})  B = B-A
\end{align*}

And for the right hand term:
\begin{align*}
    A A^{-1} (B-A) B^{-1}  B = B-A
\end{align*}
By equating both terms and multiplying by $A^{-1}$ on the left and $B^{-1}$ on the right we have the equality.

\end{proof}

\begin{proposition}[Convergence of TD-TN] 
\label{prop:conv_tn}
For $\Pi_\Phi = \Phi (\Phi^\top D \Phi)^{-1} \Phi^\top D$, the projection matrix onto the span of $\Phi$, if $\rho(\Pi_\Phi P^\pi) < \frac{1}{\gamma}$ then for $T$ large enough, Target Network Value Iteration is guaranteed to converge.
\end{proposition}
\subsection{Proof for the domain of convergence of TD(0) with TN}
\label{app:proof_tn}

The solution for the TD iteration is
\begin{equation}\label{eq:fixed_point_td}
\boxed{w^\ast = (\Phi D (I - \gamma P^\pi) \Phi)^{-1} \Phi^\top D R}
\end{equation}
\subsubsection{Regularized iteration}
Let us call $\bar{w}$ the frozen weight. We consider in this subsection only the "inner loop iteration", i.e the ones for a fixed frozen weight vector.
The update for TD with target network is 
\begin{align*}
    w_{t+1} = w_t + \eta \Phi^\top D (R + \gamma P^\pi \Phi \bar{w} - \Phi w_t)
\end{align*}
We call $w^\ast(\bar{w})$ the fixed point of that iteration rule which satisfies
\begin{equation}\label{eq:fixed_point_target}
\boxed{w^\ast(\bar{w}) = (\Phi^\top D \Phi)^{-1}\Phi^\top D(R + \gamma P^\pi \Phi \bar{w})}
\end{equation}

Thus our updates satisfy

\begin{align*}
    w_{t+1} - w^\ast(\bar{w}) &=  w_t + \eta \Phi^\top D (R + \gamma P^\pi \Phi \bar{w} - \Phi w_t)  - w^\ast(\bar{w})\\
    &= (I-\eta \Phi^\top D \Phi) w_t + \eta \Phi^\top D (R + \gamma P^\pi \Phi \bar{w})  - w^\ast(\bar{w}) \\
    &= (I-\eta \Phi^\top D \Phi) w_t  + \eta \Phi^\top D \Phi w^\ast(\bar{w})  -  w^\ast(\bar{w}), \text{using \ref{eq:fixed_point_target}} \\
    &=  (I-\eta \Phi^\top D \Phi) ( w_{t} - w^\ast(\bar{w}))
\end{align*}

So
\begin{equation}
    \boxed{ w_{t} - w^\ast(\bar{w}) =  (I-\eta \Phi^\top D \Phi)^t ( w_{0} - w^\ast(\bar{w}))}
\end{equation}

As $\Phi^\top D \Phi$ is symmetric real with positive eigenvalues, $\eta \le \frac{2}{\lambda_1}$ with $\lambda_1$ its largest eigenvalue, $w_t$ is guaranteed to converge linearly to $w^\ast(\bar{w})$.

\subsubsection{Distance between regularized and original optima}
Now we look at the vector $w^\ast(\bar{w}) - w^{\ast}$

\begin{align*}
    w^\ast(\bar{w}) - w^{\ast} &= (\Phi^\top D \Phi)^{-1} \Phi^\top D(R + \gamma P^\pi \Phi \bar{w}) - (\Phi D (I - \gamma P^\pi) \Phi)^{-1} \Phi D R\\
    &= ((\Phi^\top D \Phi)^{-1}-(\Phi^\top D (I - \gamma P^\pi) \Phi)^{-1}) \Phi^\top D R + (\Phi^\top D \Phi)^{-1} \gamma \Phi^\top D P^\pi \Phi \bar{w}\\
    &= -\gamma (\Phi^\top D \Phi)^{-1} \Phi^\top D P^\pi \Phi (\Phi D (I - \gamma P^\pi) \Phi)^{-1})\Phi^\top D R + \gamma (\Phi^\top D \Phi)^{-1}  \Phi^\top DP \Phi \bar{w},\ \text{using Lemma \ref{lemma:inverse_diff}}\\
    &= -\gamma (\Phi^\top D \Phi)^{-1} \Phi^\top D P^\pi \Phi w^\ast + \gamma (\Phi^\top D \Phi)^{-1}  \Phi^\top DP \Phi \bar{w},\ \text{By definition of} \ w^\ast\\
     &= \gamma (\Phi^\top D \Phi)^{-1} \Phi^\top D P^\pi \Phi (\bar{w}-w^\ast) \\
\end{align*}

We summarize it here, this is the bias between the regularized and true optima
\begin{equation}
    \boxed{w^\ast(\bar{w}) - w^{\ast} = \gamma (\Phi^\top D \Phi)^{-1} \Phi^\top D P^\pi \Phi (\bar{w}-w^\ast)}
\end{equation}

\subsubsection{Distance between inner loop iterate and optimum}

\begin{align*}
    \mathcal{T}_{K}(\bar{w}) - w^\ast &= \mathcal{T}_{K}(\bar{w}) - w^\ast(\bar{w}) +w^\ast(\bar{w})- w^\ast\\
    &= (I - \eta \Phi^{\top} D \Phi) ^K (\bar{w} - w^\ast(\bar{w}) + \gamma (\Phi^\top D \Phi)^{-1} \Phi^\top D P^\pi \Phi (\bar{w} - w^\ast)\\
    &= (I - \eta \Phi^{\top} D \Phi) ^K (\bar{w} - w^\ast - (w^\ast(\bar{w})-w^\ast)) + \gamma (\Phi^\top D \Phi)^{-1} \Phi^\top D P^\pi \Phi (\bar{w} - w^\ast)\\
    &= [(I - \eta \Phi^{\top} D \Phi) ^K (I - \gamma (\Phi^\top D \Phi)^{-1} \Phi^\top D P^\pi \Phi)+ \gamma (\Phi^\top D \Phi)^{-1} \Phi^\top D P^\pi \Phi ](\bar{w} - w^\ast)\\
\end{align*}

It is possible to rewrite it in value space by multiplying by $\Phi$ on the left
\begin{align*}
    \mathcal{T}_{K}(\bar{Q}) - Q^\ast &= \Phi(I - \eta \Phi^{\top} D \Phi) ^K (I - \gamma (\Phi^\top D \Phi)^{-1} \Phi^\top D P^\pi)(\bar{Q} - Q^\ast) + \gamma \Phi (\Phi^\top D \Phi)^{-1} \Phi^\top D P^\pi \Phi (\bar{w} - w^\ast)\\
    &= \Phi(I - \eta \Phi^{\top} D \Phi)^{K-1} (( \Phi^{\top} D \Phi)^{-1} - \eta I ) \Phi^{\top} D \Phi  (I - \gamma (\Phi^\top D \Phi)^{-1} \Phi^\top D P^\pi \Phi)(\bar{w} - w^\ast) + \gamma \Pi_{\Phi} P^\pi (\bar{Q} - Q^\ast)\\
     &= \Phi(I - \eta \Phi^{\top} D \Phi)^{K-1} (( \Phi^{\top} D \Phi)^{-1} - \eta I ) \Phi^{\top} D  (I - \gamma \Phi (\Phi^\top D \Phi)^{-1} \Phi^\top D P^\pi \Phi)(\bar{Q} - Q^\ast) + \gamma \Pi_{\Phi} P^\pi (\bar{Q} - Q^\ast)\\
     &= \Phi(I - \eta \Phi^{\top} D \Phi)^{K-1} (( \Phi^{\top} D \Phi)^{-1} - \eta I ) \Phi^{\top} D  (I - \gamma \Pi_{\Phi} P^\pi)(\bar{Q} - Q^\ast) + \gamma \Pi_{\Phi} P^\pi (\bar{Q} - Q^\ast)\\
     &= \big[ \Phi(I - \eta \Phi^{\top} D \Phi)^{K-1} (( \Phi^{\top} D \Phi)^{-1} - \eta I ) \Phi^{\top} D  (I - \gamma \Pi_{\Phi} P^\pi)+ \gamma \Pi_{\Phi} P^\pi \big] (\bar{Q} - Q^\ast)\\
\end{align*}

The interesting bit here when analyzing how this behaves for large $K$ is that $(I - \eta \Phi^{\top} D \Phi)^{K-1}$ is essentially a matrix that can become arbitrarily small when $K$ is large. %

\begin{equation}
    \mathcal{T}_\infty(Q(\bar{w})) - Q^{\ast} = \gamma \Pi_{\Phi} P^\pi (Q(\bar{w})-Q^\ast)
\end{equation}

This is guaranteed to converge if $\Pi_{\Phi} P$ is a contraction, which is the case for instance when $D$ is the stationary distribution of $P$.

Also as $Q^\ast = \Phi w^\ast$ is the fixed point of the projected Bellman iteration, we have
$$\mathcal{T}_\infty(Q(\bar{w}))  =  \Pi_{\Phi} \big( R + \gamma P^\pi Q(\bar{w}) \big)$$
This is the \emph{projected value iteration} or \emph{projected policy evaluation} algorithm~\citep{bertsekas1996neuro}.

\subsubsection{Continuity of norms and spectral radius}
Now if we suppose that $\rho(\Pi_{\Phi} P^\pi)< \frac{1}{\gamma}$ or written differently $\rho( \gamma \Pi_{\Phi} P^\pi) \le \alpha < 1$, as $\rho$ is a continuous function of its input entries, for $K$ large enough, we can guarantee that $$\forall \epsilon > 0, \exists K \in \mathbb{N} / \rho(\Phi(I - \eta \Phi^{\top} D \Phi)^{K-1} (( \Phi^{\top} D \Phi)^{-1} - \eta I ) \Phi^{\top} D  (I - \gamma \Pi_{\Phi} P^\pi \Phi)+ \gamma \Pi_{\Phi} P^\pi ) < \alpha + \epsilon$$ In particular for $\epsilon < 1-\alpha$ which would ensure the convergence of the algorithm.

The lower bound on $K$ can be made more precise using the Bauer-Fike theorem $\rho(A+B)\le \rho(A) + \|B\| \kappa(P_A)$

We can write 
\begin{align*}
\rho_K &\le \gamma \rho(\Pi_{\Phi} P^\pi ) + \kappa \| (I - \eta \Phi^{\top} D \Phi) ^K (I - \gamma (\Phi^\top D \Phi)^{-1} \Phi^\top D P^\pi \Phi)\| \\
&\le \gamma \rho(\Pi_{\Phi} P^\pi ) +C \| (I - \eta \Phi^{\top} D \Phi)\|^K\cdot \\
\end{align*}
where $C = \kappa \| I - \gamma (\Phi^\top D \Phi)^{-1} \Phi^\top D P^\pi \Phi)\|$ for $\kappa$ the condition number of the eigenvector matrix of $\Pi_{\Phi} P$.

So we can take
\begin{equation*}
    K > \frac{\log \frac{C}{1-\gamma \rho(\Pi_{\Phi} P^\pi )}}{\log \frac{1}{\| (I - \eta \Phi^{\top} D \Phi)\|}}
\end{equation*}

which ensures the spectral radius of TD-TN is $<1$.

\subsection{Proof for TD-FR, Proposition~\ref{prop:conv_fr}}
\label{app:proof_fr}
Let us consider the case where parametrize $Q$ with a linear function approximation, $Q = \Phi w$, $\Phi \in \mathbb{R}^{|\mathcal{S}|\cdot |\mathcal{A}| \times p}, w \in \mathbb{R}^p$.

\subsubsection{Part 1: Fixed point of the regularized loss}
Let us look at when the gradient is $0$:
$$-\Phi^\top D (R+\gamma P^\pi \Phi w - \Phi w) + \kappa \Phi^\top D \Phi (w - \bar{w}) = 0$$
Thus $\Phi^\top D (I+\kappa - \gamma P^\pi)\Phi w = \Phi^\top D R + \kappa \Phi^\top D \Phi\bar{w}$. We call $$\boxed{w_\kappa(\bar{w}) = ( \Phi^\top D (I+\kappa - \gamma P^\pi)\Phi )^{-1} (\Phi^\top D R + \kappa \Phi^\top D \Phi\bar{w})}$$ the solution of the regularized problem.
We denote $A_\kappa =   \Phi^\top D (I+\kappa - \gamma P^\pi)\Phi$ the regularized \emph{iteration} matrix. The TD(0) update is therefore

\begin{eqnarray*}
w_{t+1}  &=& (I - \eta A_\kappa) w_t +\eta (\Phi^\top D R  + \kappa \Phi^\top D \Phi\bar{w})\\
&=& (I - \eta A_\kappa) w_t +\eta A_\kappa w_\kappa(\bar{w})\\
\end{eqnarray*}

Therefore

\begin{equation}\label{eq:iter_reg}
w_{t+1} -  w_\kappa(\bar{w}) =(I - \eta A_\kappa) (w_t - w_\kappa(\bar{w}))\\
\end{equation}

\subsubsection{Part 2: Difference between regularized optima and true optima}

Let us look at the quantity $w_\kappa(\bar{w}) - w^\ast$ where $w^\ast = A_0^{-1} \Phi^\top D R$, the solution of the unregularized problem. First, we will use the fact that $A^{-1} - B^{-1} = A^{-1} (B-A) B^{-1}$ to get

\begin{align*}
    w_\kappa(\bar{w}) - w^\ast &= A_{\kappa}^{-1} (\Phi^\top D R + \kappa \Phi^\top D \Phi\bar{w}) - A_0^{-1} \Phi^\top D R\\
    &=  -\kappa A_{\kappa}^{-1} \Phi^\top D \Phi  A_{0}^{-1} \Phi^\top D R + \kappa A_{\kappa}^{-1} \Phi^\top D \Phi\bar{w}\\
    &= \kappa A_{\kappa}^{-1}  \Phi^\top D \Phi  ( \bar{w} - w^\ast)
\end{align*}
Thus we have
\begin{equation}\label{eq:reg_opt}
    w_\kappa(\bar{w}) - w^\ast= \kappa A_{\kappa}^{-1}  \Phi^\top D \Phi  ( \bar{w} - w^\ast)
\end{equation}

\subsubsection{All together}

\begin{align*}
    \mathcal{T}^T_\kappa(w_t) - w^\ast &= \mathcal{T}^T_\kappa(w_t) -w_\kappa(w_t) + w_\kappa(w_t)   - w^\ast\\
    &= (I-\eta A_\kappa)^{T} (w_t - w_\kappa(w_t)) + \kappa A_{\kappa}^{-1}  \Phi^\top D \Phi  ( w_t- w^\ast), \ \text{using \ref{eq:iter_reg} and \ref{eq:reg_opt}}\\
    &= (I-\eta A_\kappa)^{T} (w_t - w^\ast - ( w_\kappa(w_t)-w^\ast)) + \kappa A_{\kappa}^{-1}  \Phi^\top D \Phi  ( w_t- w^\ast)\\
     &= (I-\eta A_\kappa)^{T} (w_t - w^\ast - \kappa A_{\kappa}^{-1}  \Phi^\top D \Phi  ( w_t- w^\ast)) + \kappa A_{\kappa}^{-1}  \Phi^\top D \Phi  ( w_t- w^\ast), \ \text{Using \ref{eq:reg_opt} again}\\
     &= \big[(I-\eta A_\kappa)^{T} (I -  \kappa A_{\kappa}^{-1}  \Phi^\top D \Phi ) + \kappa A_{\kappa}^{-1}  \Phi^\top D \Phi  \big]( w_t- w^\ast)
\end{align*}

\subsubsection{Studying $ I - \eta A_{\kappa}$}
For this, let us study the eigenvalues of $A_{\kappa}$, and make sure they have a positive real part. Then by choosing $\eta \le 2\ \mathfrak{Re}(\frac{1}{\lambda_{\text{max}}})$ the matrix will be stable.

For this, we will need, an assumption, i.e that TD(0) converges for that combination of $\Phi, D, P$ and $\gamma$.
\begin{assumption}
$Sp(\Phi^\top D \Phi - \gamma\Phi^\top D P^\pi \Phi) \subset \mathbb{C}^+$
\end{assumption}

Let us call $\epsilon_\kappa = \min_{\lambda \in Sp(A_\kappa)} \mathfrak{Re}(\lambda)$.

As $\lim_{\kappa \to 0^+} A_{\kappa} = \Phi^\top D \Phi - \gamma\Phi^\top D P^\pi \Phi$, we have $\epsilon_0 > 0$ and by continuity of the spectrum, $\exists \delta$ so that $\forall \kappa < \delta$, $\epsilon_\kappa \ge \frac{\epsilon_0}{2} > 0$. Thus we just showed that for $\kappa$ small enough the matrix $A_\kappa$ was also stable.

\subsubsection{Studying $ \kappa A_{\kappa}^{-1}  \Phi^\top D \Phi$}

Here we just need to show that for $\kappa$ small enough this matrix will be stable, i.e have a spectral radius bounded by $1$. It is sufficient to notice that $A_{\kappa}^{-1}  \Phi^\top D \Phi$ has a finite limit when $\kappa \to 0$, which is equal to $A_{0}^{-1}  \Phi^\top D \Phi$ which is indeed bounded as $A_0 =\Phi^\top D \Phi - \gamma\Phi^\top D P^\pi \Phi$ has non-zero eigenvalues as they are $\subset \mathbb{C}^{+}$ as per the Assumption above.

Thus, for $\lim_{\kappa \to 0} \kappa A_{\kappa}^{-1}  \Phi^\top D \Phi = 0$ which has a spectral radius of $0$. Then again, by continuity of the spectrum thus spectral radius, for $\kappa$ small enough, $\kappa A_{\kappa}^{-1}  \Phi^\top D \Phi$ has a spectral radius $< 1$.

\begin{corollary}[Stability of TD-FR and TD-TN] \label{corollary:stability}
When $p=1$, for $T$ large enough, the convergence domain of TD-TN is included in the one of FR-TN.
\end{corollary}
\subsection{Proof for Corollary~\ref{corollary:stability}}
\label{app:proof_cor}
We first need to show that $Sp(\Phi^\top D \Phi - \gamma \Phi^\top D P^\pi \Phi) \subset \mathbb{C}^+$ implies that $\rho \big((\Phi^\top D \Phi)^{-1} \Phi^\top D P^\pi \Phi)\big) < \frac{1}{\gamma}$.

Let us consider $\rvv$ a eigenvector of $(\Phi^\top D \Phi)^{-1} \Phi^\top D P^\pi \Phi$ associated with the eigenvalue $\lambda$. 
Thus
$$\Phi^\top D P^\pi \Phi \rvv = \lambda \Phi^\top D \Phi\rvv$$
From this we multiply by $\gamma$ and substract and add $\Phi^\top D \Phi\rvv$:

$$\big(\Phi^\top D \Phi -\gamma \Phi^\top D P^\pi \Phi \big) \rvv = (1-\lambda \gamma) \Phi^\top D \Phi \rvv$$

Multiplying by $\rvv^\top$:

$$\rvv^\top \big(\Phi^\top D \Phi -\gamma \Phi^\top D P^\pi \Phi \big) \rvv = (1-\lambda \gamma) \rvv^\top \Phi^\top D \Phi \rvv$$

As $\Phi^\top D \Phi$ is positive definite (non-singular by assumption), $\rvv^\top \Phi^\top D \Phi \rvv > 0$. However we can't conclude anything in the general case with non-symmetric matrices. When $p=1$, these matrix products become real scalars as all the coefficients are real. So for $x^2 = \Phi^\top D \Phi$

$$\big(\Phi^\top D \Phi -\gamma \Phi^\top D P^\pi \Phi \big) = (1-\lambda \gamma) x^2$$ so $\big(\Phi^\top D \Phi -\gamma \Phi^\top D P^\pi \Phi \big) > 0$ implies that $(1-\lambda \gamma) > 0$, thus $\lambda < \frac{1}{\gamma}$ the maximum and only eigenvalue, thus the spectral radius of $(\Phi^\top D \Phi)^{-1} \Phi^\top D P^\pi \Phi$ is smaller than $ \frac{1}{\gamma}$.

We denote by $\gD_{\text{TD}} = \{ \Phi, D | Sp(\Phi^\top D \Phi - \gamma \Phi^\top D P^\pi \Phi) \subset \mathbb{C}^+\}$ and $\gD_{\text{TN}} =  \{ \Phi, D | \rho \big((\Phi^\top D \Phi)^{-1} \Phi^\top D P^\pi \Phi)\big) < \frac{1}{\gamma} \}$ the convergence domains of TD and TN ($T=+\infty)$. We just proved for $p=1$ that $\gD_{\text{TN}} \subset \gD_{\text{TD}}$. We know from Proposition~\ref{prop:conv_tn} and~\ref{prop:conv_fr} that for $\kappa$ small enough, FR(T) and TN(T) (FR and TN using a period update of $T$) and $\gD_{\text{FR(T)}}$, $\gD_{\text{TN(T)}}$ their convergence domains that $\lim_{T \to +\infty} \gD_{\text{FR(T)}} = \gD_{\text{TD}}$ and $\lim_{T \to +\infty} \gD_{\text{TN(T)}} = \gD_{\text{TN}}$. Thus by taking the limit we have the inclusion we needed.

\newpage
\section{Time comparison}

\begin{figure}[h]
    \centering
    \includegraphics[width=\textwidth]{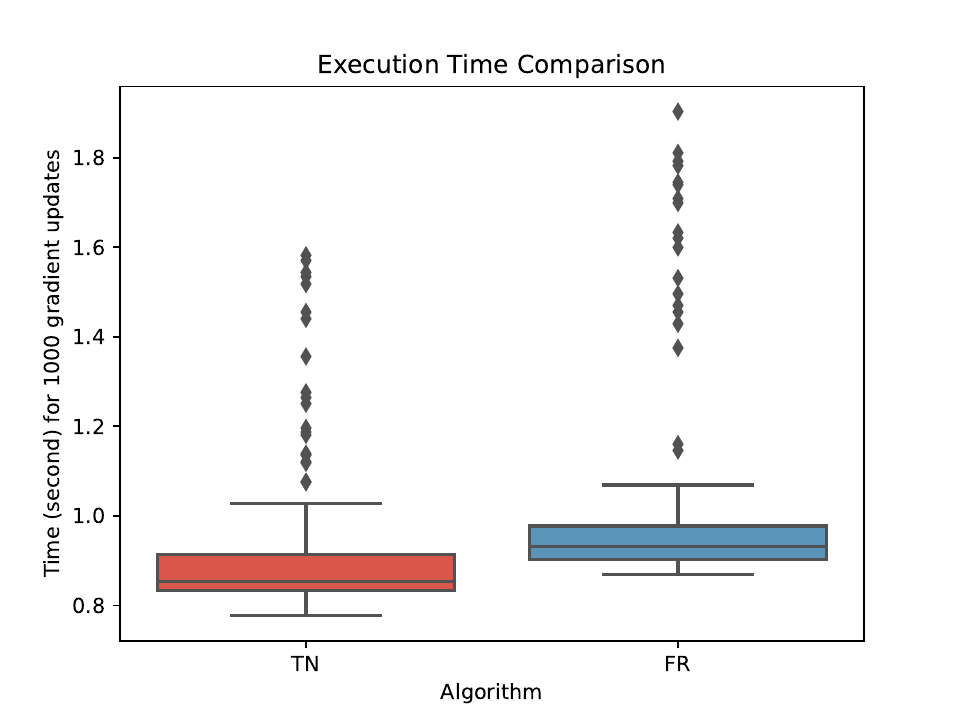}
    \caption{\textbf{Time comparison of the gradient updates between FR and TN.} We note that in a training scenario a large portion of the time is spent on inference and simulation which make the difference in gradient update time negligible.}
    \label{fig:time_comparison}
\end{figure}

\section{Additional Results and Hyper-parameters}

\subsection{Four Rooms}
\begin{table}[h]
  \caption{Four Rooms Hyper-parameters}
  \label{hp:four_rooms}
  \centering
  \begin{tabular}{cc}
    \toprule
    Hyperparameter & Value \\
    \midrule
    learning rate & 1e-4\\
    optimizer & adam~\citep{kingma2014adam} \\
    discount factor $\gamma$ & 0.99 \\
    DNN layers & [128, 128, 4] \\
    dimension & $11\times 11$\\
    \bottomrule
  \end{tabular}
\end{table}

\subsection{2-state MDP}
We see on \Cref{app:fig_simple_mdp} that increasing the discount factor leads to larger divergence regions. However, the circle of radius $\pi(s_0)$ which corresponds to the on-policy algorithm stays in the convergent domain as predicted by theory.

\label{app:fig_simple_mdp}
\begin{figure*}[ht]
    \centering
     \begin{subfigure}[b]{0.3\textwidth}
         \centering
         \includegraphics[width=\textwidth]{figures/new_circles/disk_td995.png}
         \caption{\textbf{Spectral radius for FR}}
         \label{fig:circle_td_0.999}
     \end{subfigure}
     \begin{subfigure}[b]{0.3\textwidth}
         \centering
         \includegraphics[height=\textwidth]{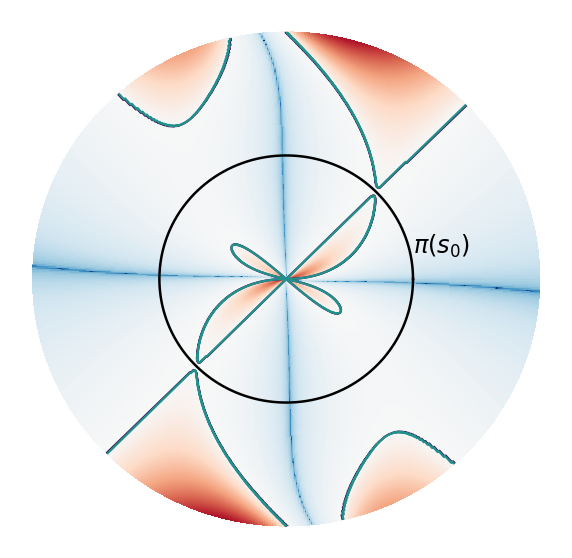}
         \caption{\textbf{TN}}
         \label{fig:circle_tn_0.999}
     \end{subfigure}
     \begin{subfigure}[b]{0.3\textwidth}
         \centering
         \includegraphics[height=\textwidth]{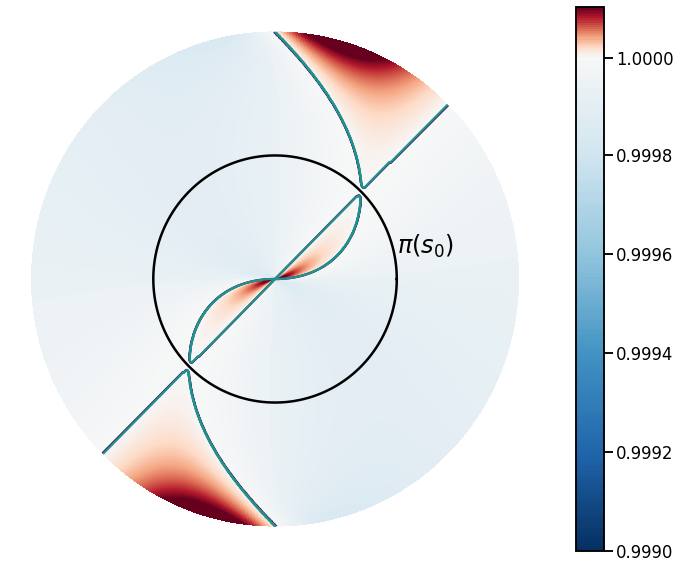}
         \caption{\textbf{FR}}
         \label{fig:circle_fr_0.999}
     \end{subfigure}
    \caption{\textbf{2-state MDP.} $\gamma = 0.9995$}
    \label{fig:2state_gamma_0.999}
\end{figure*}

\begin{figure*}[ht]
    \centering
     \begin{subfigure}[b]{0.3\textwidth}
         \centering
         \includegraphics[width=\textwidth]{figures/new_circles/disk_td.png}
         \caption{\textbf{Spectral radius for FR}}
         \label{fig:circle_td_0.99}
     \end{subfigure}
     \begin{subfigure}[b]{0.3\textwidth}
         \centering
         \includegraphics[height=\textwidth]{figures/new_circles/disk_tn.png}
         \caption{\textbf{TN}}
         \label{fig:circle_tn_0.99}
     \end{subfigure}
     \begin{subfigure}[b]{0.3\textwidth}
         \centering
         \includegraphics[height=\textwidth]{figures/new_circles/disk_fr.png}
         \caption{\textbf{FR}}
         \label{fig:circle_fr_0.99}
     \end{subfigure}
    \caption{\textbf{2-state MDP.} $\gamma = 0.995$}
    \label{fig:2state_gamma_0.99}
\end{figure*}

\begin{figure*}[ht]
    \centering
     \begin{subfigure}[b]{0.3\textwidth}
         \centering
         \includegraphics[width=\textwidth]{figures/new_circles/disk_td95.png}
         \caption{\textbf{Spectral radius for FR}}
         \label{fig:circle_td_0.9}
     \end{subfigure}
     \begin{subfigure}[b]{0.3\textwidth}
         \centering
         \includegraphics[height=\textwidth]{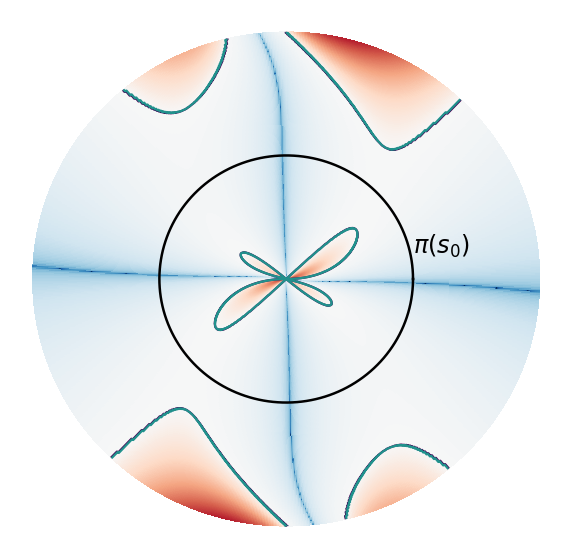}
         \caption{\textbf{TN}}
         \label{fig:circle_tn_0.9}
     \end{subfigure}
     \begin{subfigure}[b]{0.3\textwidth}
         \centering
         \includegraphics[height=\textwidth]{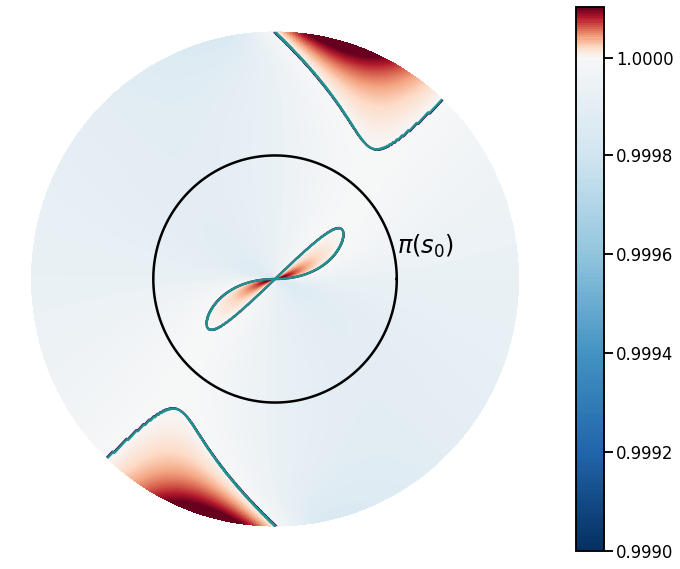}
         \caption{\textbf{FR}}
         \label{fig:circle_fr_0.9}
     \end{subfigure}
    \caption{\textbf{2-state MDP.} $\gamma = 0.95$}
    \label{fig:2state_gamma_0.9}
\end{figure*}

\subsection{Atari Suite}
\begin{figure}
    \centering
    \includegraphics[width=0.9\textwidth,height=0.9\textheight,keepaspectratio]{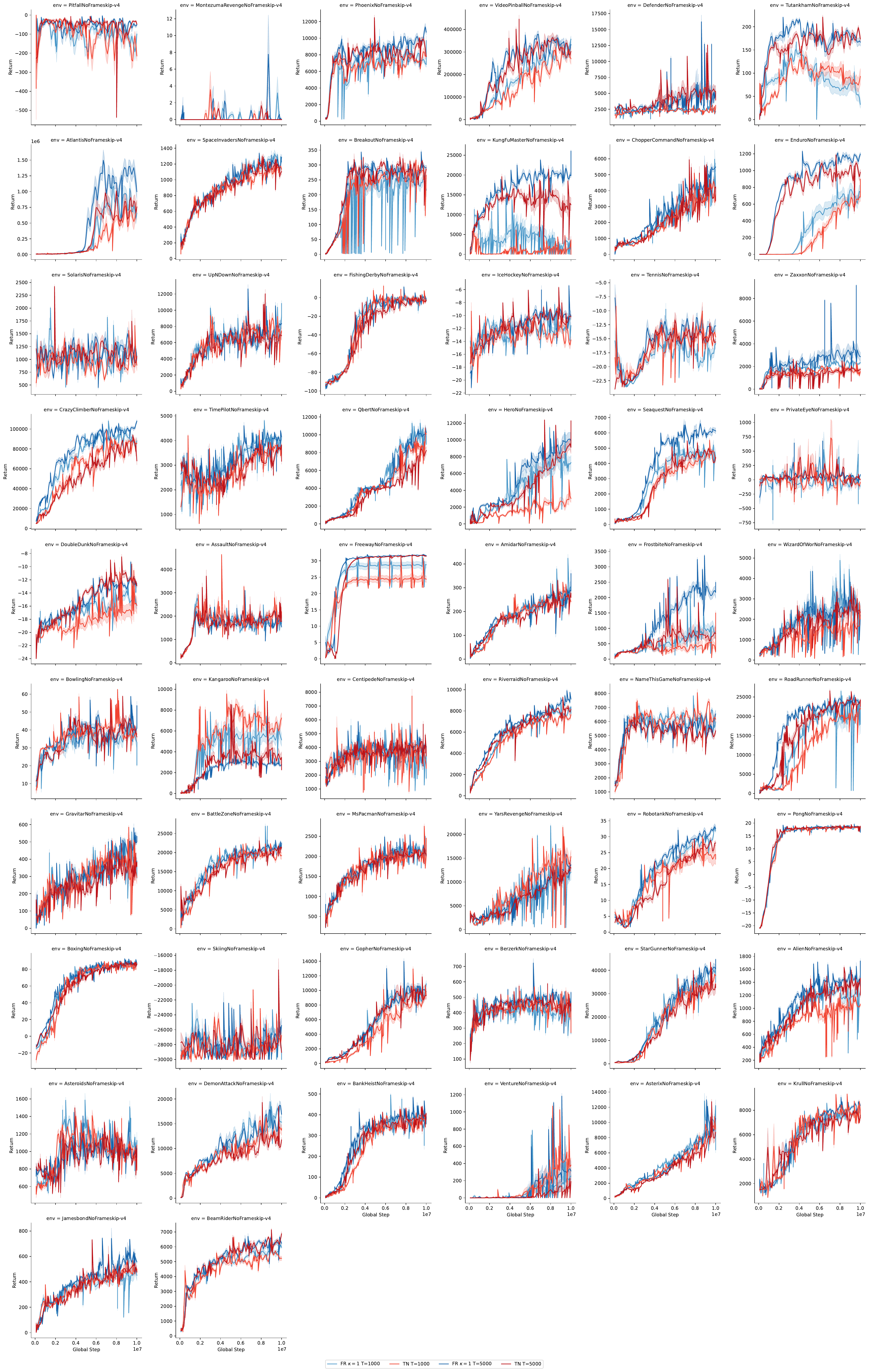}
    \caption{\textbf{Atari suite learning curves.}}
    \label{fig:atari_learning_curves}
\end{figure}

\newpage
\subsection{Atari Sensitivity Analysis}

\begin{figure}[h]
    \centering
    \includegraphics[width=\textwidth]{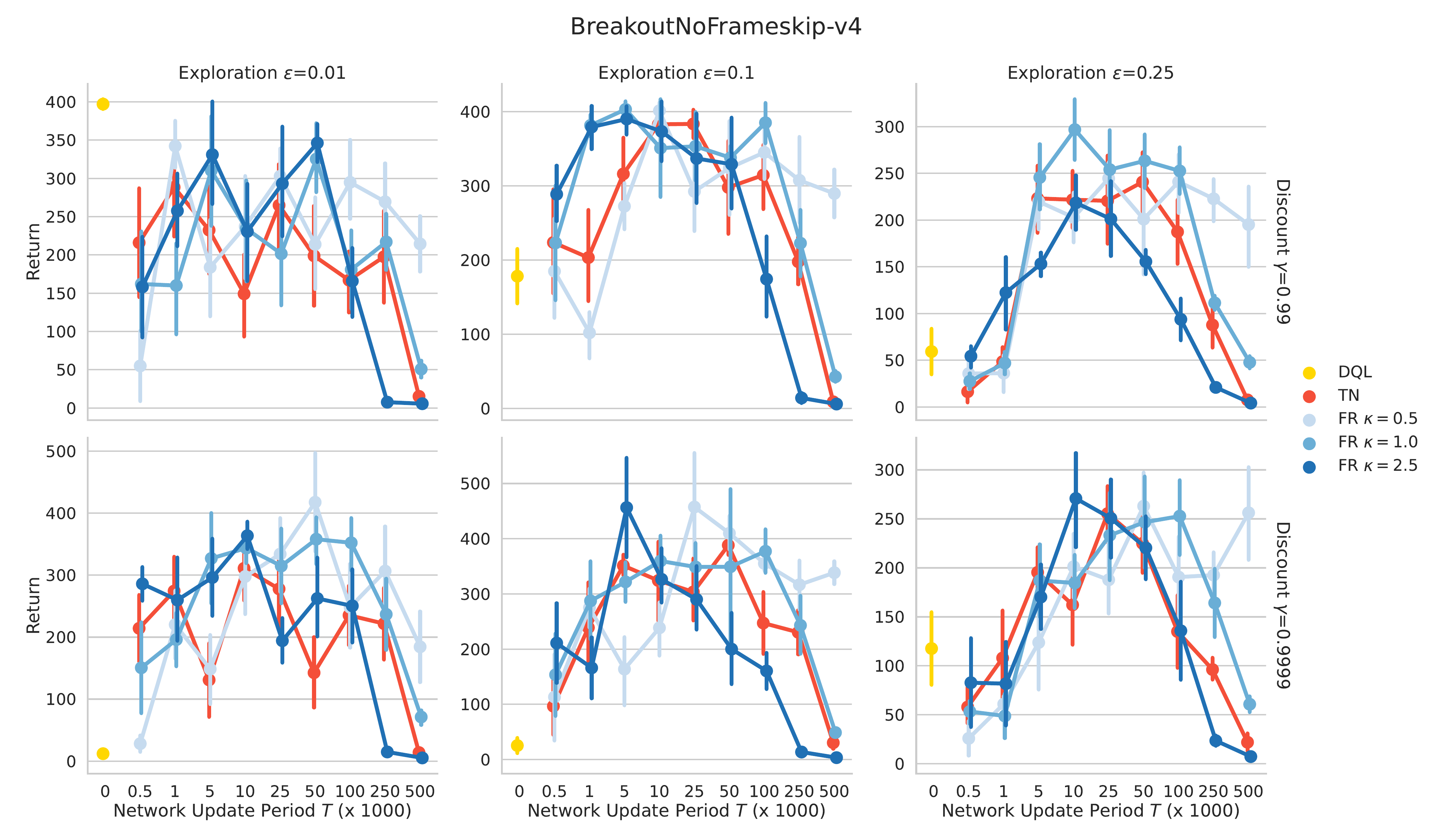}
    \caption{Breakout results}
    \label{fig:breakout}
\end{figure}

\begin{figure}
    \centering
    \includegraphics[width=\textwidth]{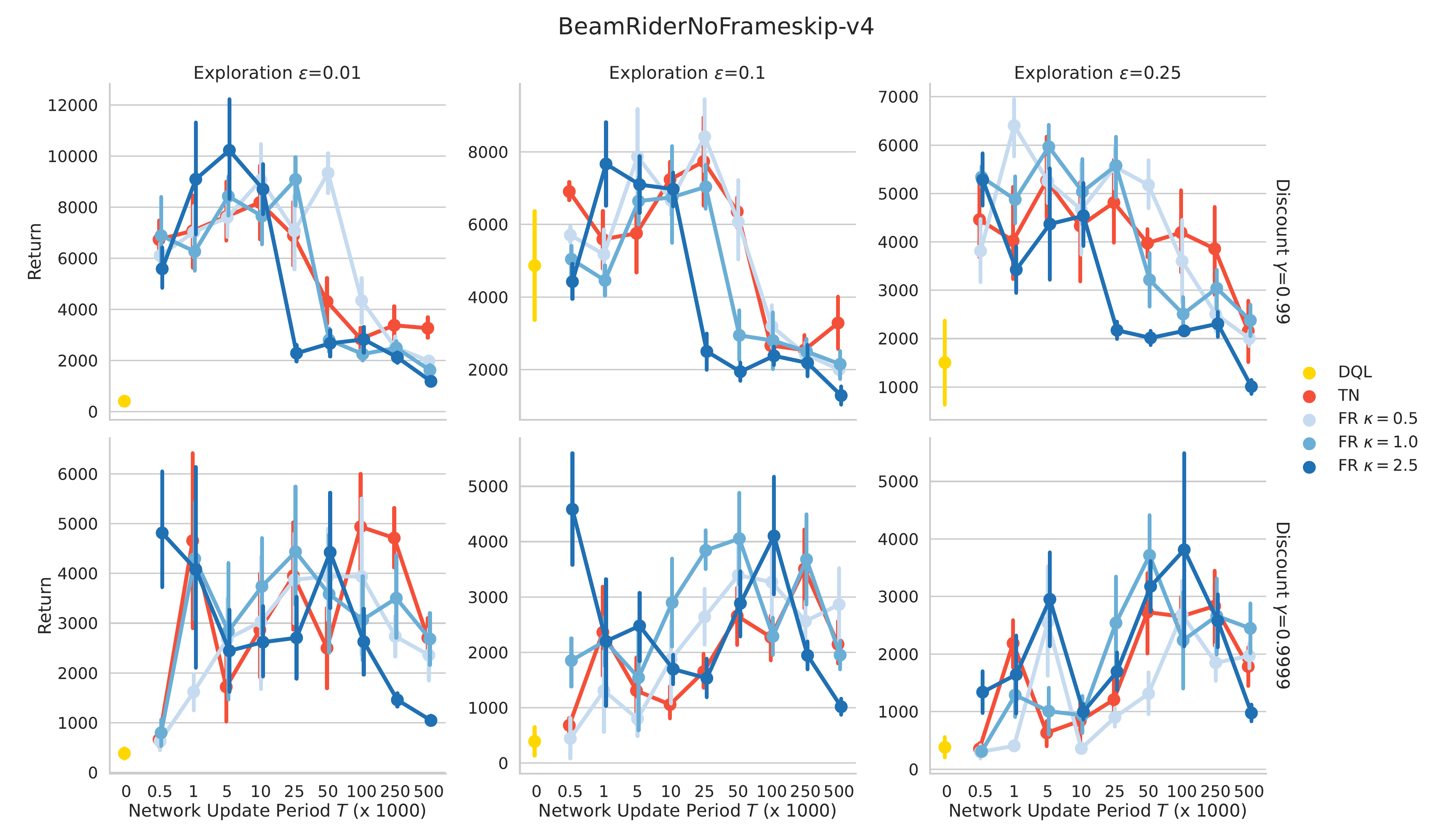}
    \caption{Beam Rider results}
    \label{fig:beam_rider}
\end{figure}

\begin{figure}
    \centering
    \includegraphics[width=\textwidth]{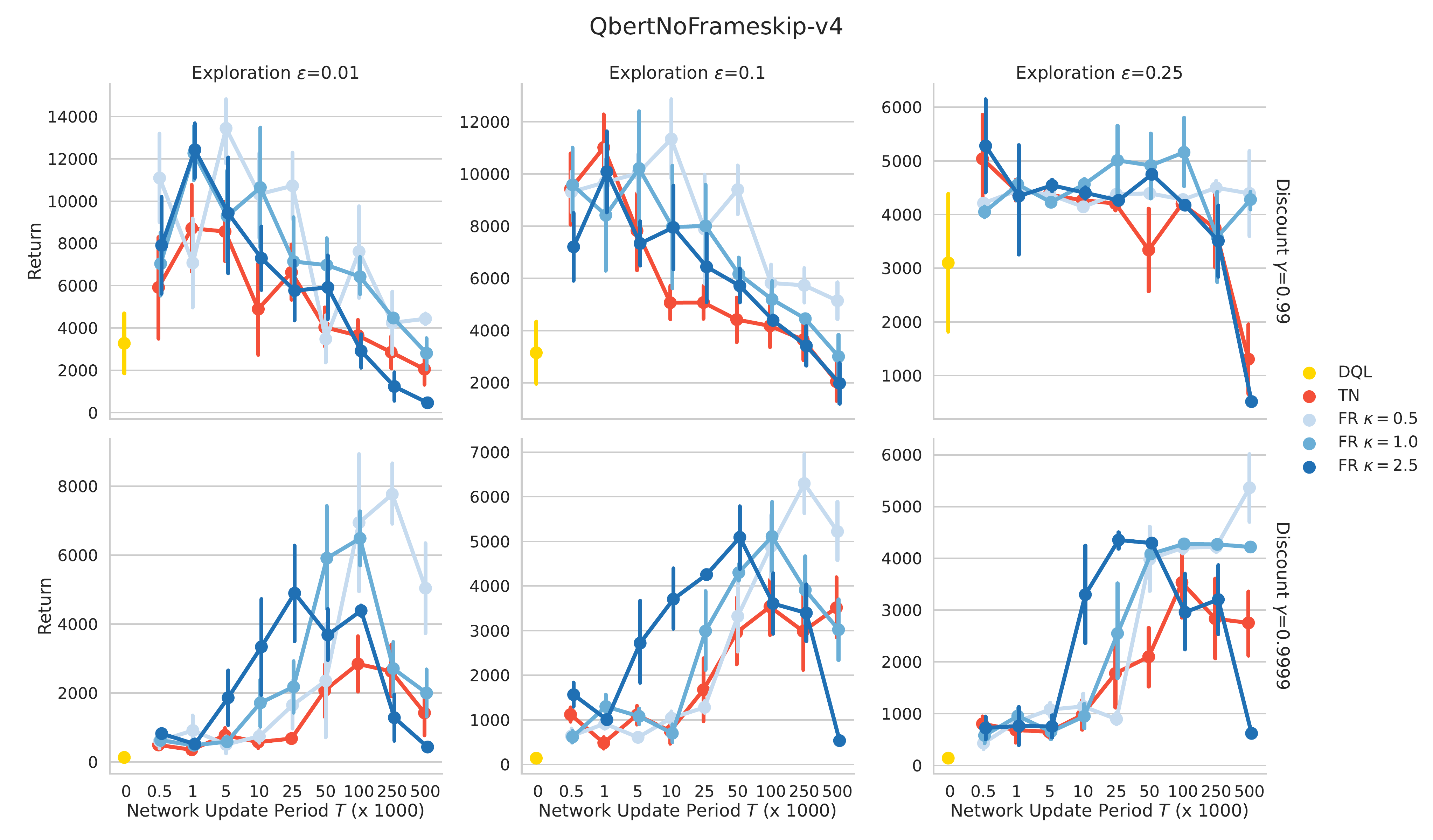}
    \caption{Qbert results}
    \label{fig:qbert}
\end{figure}

\begin{figure}
    \centering
    \includegraphics[width=\textwidth]{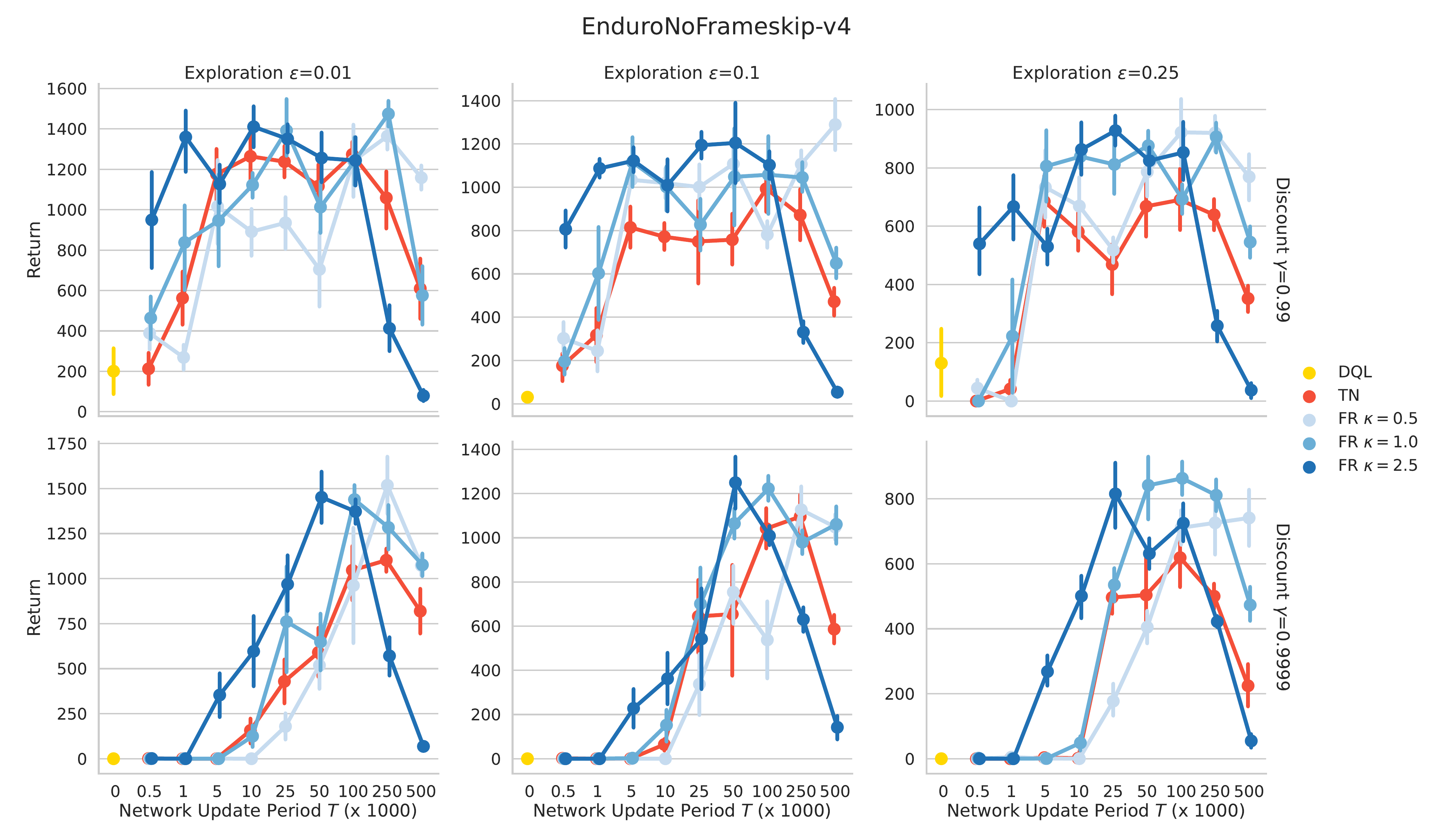}
    \caption{Enduro results}
    \label{fig:enduro}
\end{figure}

\begin{figure}
    \centering
    \includegraphics[width=\textwidth]{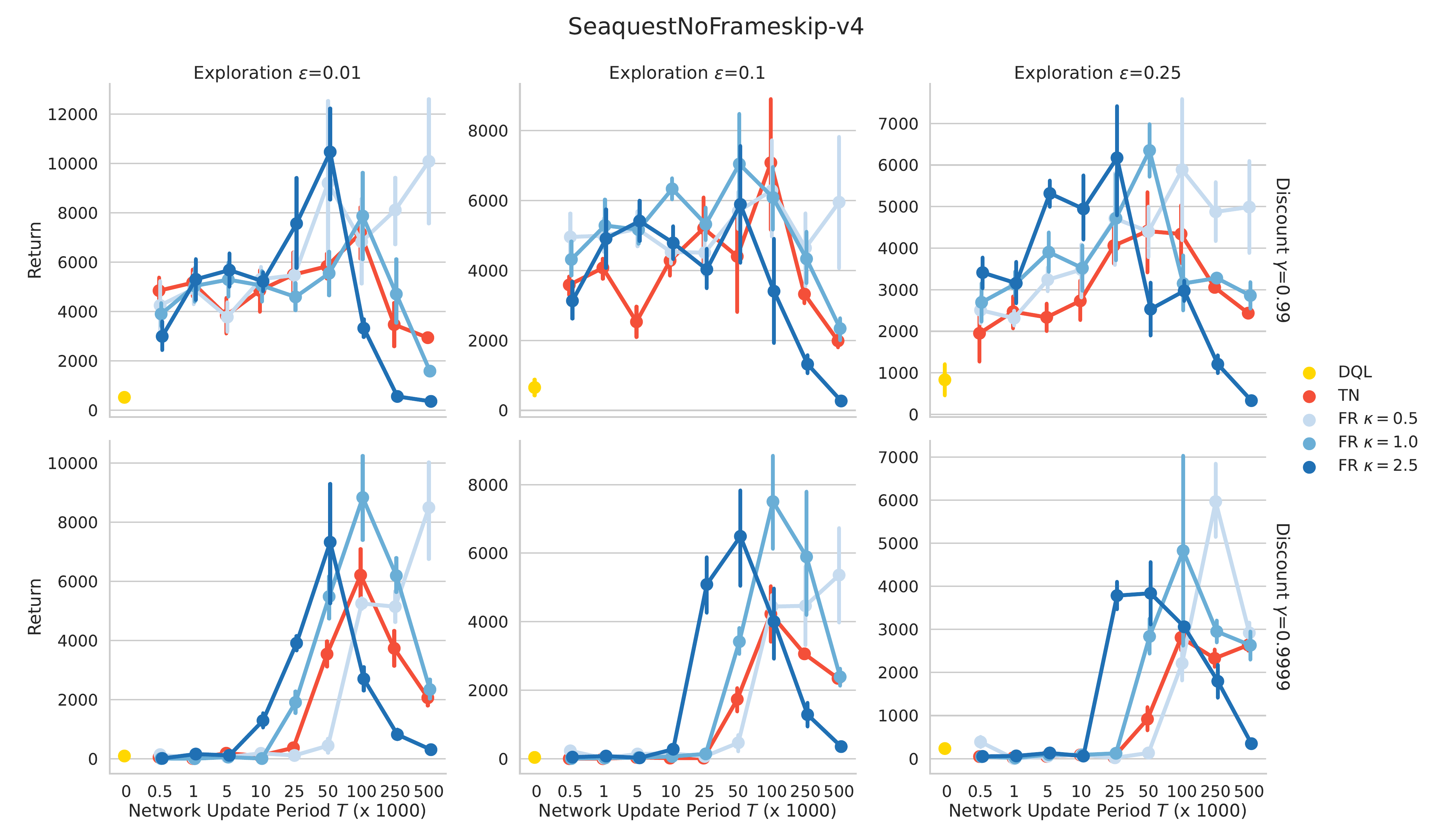}
    \caption{Seaquest results}
    \label{fig:seaquest}
\end{figure}

\begin{figure}
    \centering
    \includegraphics[width=\textwidth]{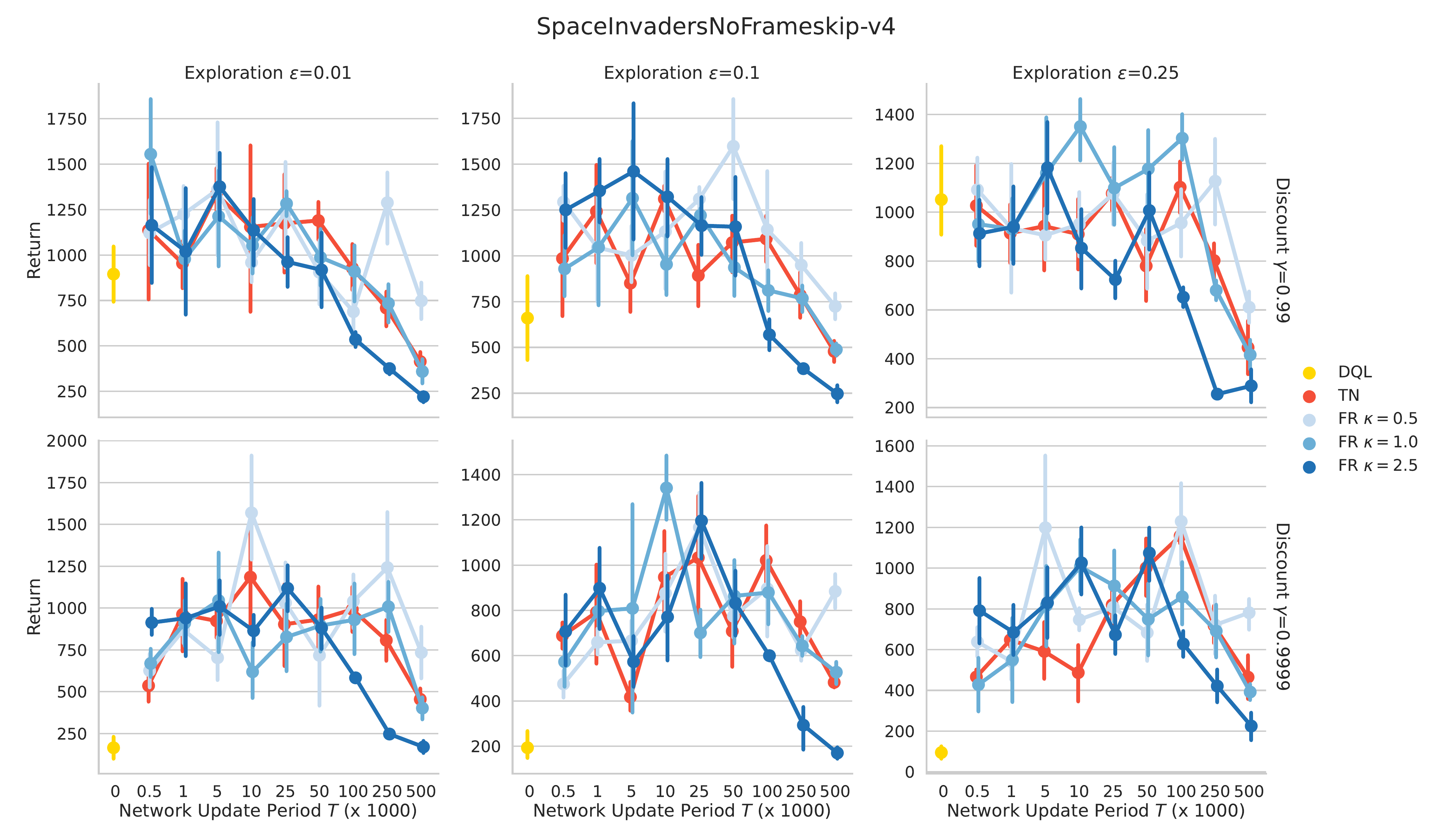}
    \caption{SpaceInvaders results}
    \label{fig:space_invader}
\end{figure}

\end{document}